\documentclass[lettersize,journal]{IEEEtran}
\usepackage{amsmath,amsfonts}
\usepackage{array}
\usepackage{subcaption}

\captionsetup{compatibility=false}

\usepackage{textcomp}
\usepackage{stfloats}
\usepackage{url}
\usepackage{verbatim}
\usepackage{graphicx}
\usepackage{cite}
\usepackage{multirow}
\usepackage{wrapfig}
\hyphenation{op-tical net-works semi-conduc-tor IEEE-Xplore}
\usepackage{xcolor}         
\usepackage{algorithm}
\usepackage{algpseudocode}
\usepackage{amsthm}
\usepackage{graphicx}
\usepackage{subcaption}
\usepackage{hyperref}
\usepackage{url}
\usepackage{amsmath}  
\usepackage{amssymb}

\newcommand{\algo}{\texttt{VARCO}}
\newcommand{\non}{\rho}

\newcommand{\lipGrad}{L_\nabla}
\input{latex_resources/mysymbol.sty}
\input{latex_resources/juan_defs.sty}

\begin{document}

\title{Distributed Training of Large Graph Neural Networks with Variable Communication Rates }

\author{Juan Cervi\~no$^*$, Md Asadullah Turja$^*$, Hesham Mostafa$^*$, Nageen Himayat, Alejandro Ribeiro
\thanks{$^*$These authors contributed equally to the paper.  JC, AR are with the Department of Electrical and
Systems Engineering, University of Pennsylvania, PA. MAT is with the University of North Carolina at Chapel Hill. HM, NH are with the Intel Corporation. JC is the corresponding author jcervino@seas.upenn.edu.}
}



\maketitle

\begin{abstract}
	Training Graph Neural Networks (GNNs) on large graphs presents unique challenges due to the large memory and computing requirements. 
	Distributed GNN training, where the graph is partitioned across multiple machines, is a common approach to training GNNs on large graphs. 
	However, as the graph cannot generally be decomposed into small non-interacting components, data communication between the training machines quickly limits training speeds. 
	Compressing the communicated node activations by a fixed amount improves the training speeds, but lowers the accuracy of the trained GNN. 
	In this paper, we introduce a variable compression scheme for reducing the communication volume in distributed GNN training without compromising the accuracy of the learned model. 
	Based on our theoretical analysis, we derive a variable compression method that converges to a solution equivalent to the full communication case, for all graph partitioning schemes. 
	Our empirical results show that our method attains a comparable performance to the one obtained with full communication. We outperform full communication at any fixed compression ratio for any communication budget.
 \end{abstract}

\begin{IEEEkeywords}
Graph Neural Networks, Distributed Training.  
\end{IEEEkeywords}

\section{Introduction}
\IEEEPARstart{G}{raph} Neural Networks (GNNs) are a neural network architecture tailored for graph-structured data \cite{zhou2020graph,wu2020comprehensive,bronstein2017geometric}. GNNs are multi-layered networks, where each layer is composed of a (graph) convolution and a point-wise non-linearity \cite{gama2018convolutional}.
GNNs have shown state-of-the-art performance in robotics \cite{gama2020decentralized,tzes2023graph}, weather prediction \cite{lam2022graphcast}, protein interactions \cite{jumper2021highly} and physical system interactions \cite{fortunato2022multiscale}, to name a few. The success of GNNs can be attributed to some of their theoretical properties such as generalization \cite{NEURIPS2022_1eeaae7c,2024arXiv240605225W},  being permutation-invariant \cite{keriven2019universal,satorras2021n}, stable to perturbations of the graph \cite{gama2020stability}, transferable across graphs of different sizes \cite{ruiz2020graphon}, and their expressive power \cite{xu2018powerful,bouritsas2022improving,kanatsoulis2022graph,chen2019equivalence}. 

In a GNN, the data is propagated through the graph via graph convolutions, which aggregate information across neighborhoods. In large-scale graphs, the data diffusion over the graph is costly in terms of computing and memory requirements. To overcome this limitation, several solutions were proposed. Some works have focused on the transferability properties of GNNs, i.e. training a GNN on a small graph and deploying it on a large-scale graph \cite{ruiz2020graphon,maskey2023transferability}. Other works have focused on training on a sequence of growing graphs \cite{cervino2023learning,10094894}. Though useful, these solutions either assume that an accuracy degradation is admissible (i.e. transferability bounds), or that all the graph data is readily accessible within the same training machine. These assumptions may not hold in practice, as we might need to recover the full centralized performance without having the data in a centralized manner.

Real-world large-scale graph data typically cannot fit within the memory of a single machine or accelerator, which forces GNNs to be learned in a distributed manner \cite{cai2021dgcl,wang2021flexgraph,zheng2020distdgl,wang2022neutronstar}. To do this efficiently, several solutions have been proposed. 
There are \textit{data-parallel} approaches that distribute the data across different machines where model parameters are updated with local data and then aggregated via a parameter server. 
Another solution is \textit{federated learning} (FL), where the situation is even more complex as data is naturally distributed across nodes and cannot be shared to a central location due to privacy or communication constraints\cite{bonawitz2019towards,li2020review,li2020federated}.  
Compared to data parallel approaches FL suffers from data heterogeneity challenges as we cannot control the distribution of data across nodes to be identically distributed \cite{shen2022an}. The GNN-FL adds additional complexity as the graph itself (input part of the model) is split across different machines.  
The GNN-FL counterpart has proven to be successful when the graph can be split into different machines\cite{he2021fedgraphnn,mei2019sgnn}. However, training locally while assuming no interaction between datasets is not always a reasonable assumption for graph data. 
 \begin{figure}
		\includegraphics[width=0.48\textwidth]{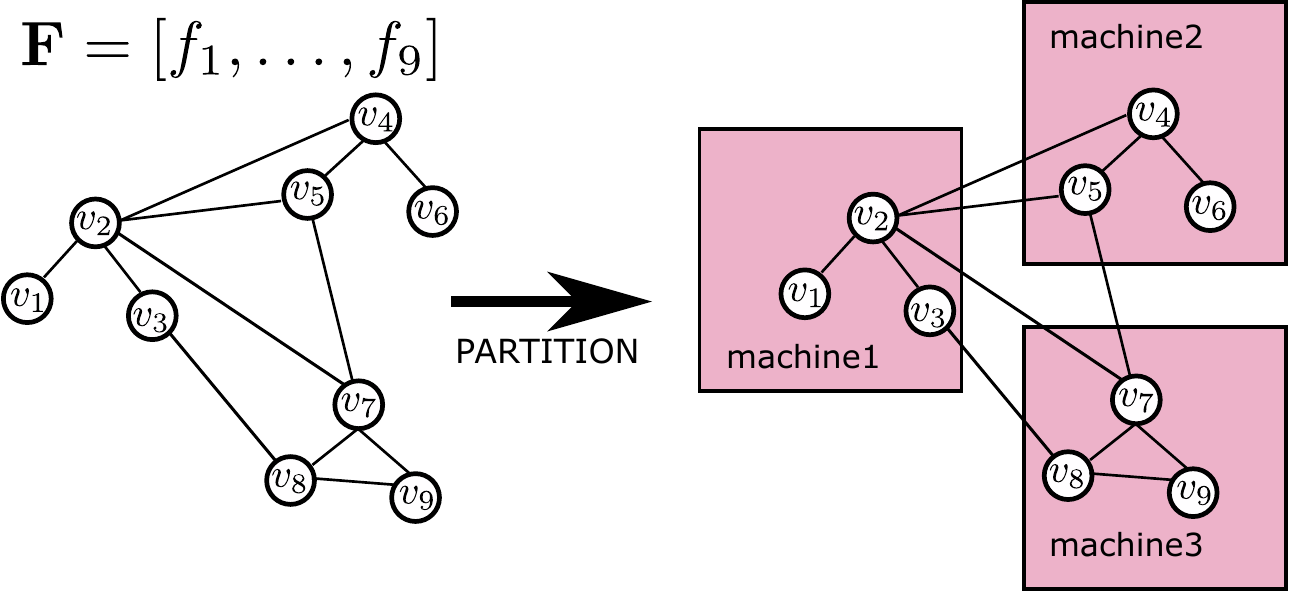} 
	\caption{Example of partitioning a graph with $9$ nodes into $3$ machines. Each machine only stores the features of the nodes in their corresponding partition. }
 \label{fig:partitions}
 \end{figure}

Two observations of GNNs draw this work. First, large graph datasets cannot be split into non-interacting pieces across a set of machines. Therefore, training GNNs distributively requires interaction between agents in the computation of the gradient updates. Second, the amount of communicated data affects the performance of the trained model; the more we communicate the more accurate the learned model will be. In this paper, we posit that the compression rate in the communication between agents should vary between the different stages of the GNN training. Intuitively, at the early stages of training, the communication can be less reliable, but as training progresses, and we are required to estimate the gradient more precisely, the quality of the communicated data should improve. This observation translates into a varying compression rate, that compresses more at the early stages of training, and less at the later ones.

This paper considers the problem of efficiently learning a GNN across a set of machines, each of which has access to part of the graph data (see Figure \ref{fig:partitions}). Drawn from the observation that in a GNN, model parameters are significantly smaller than the graph's input and intermediate node feature, we propose to compress the intermediate GNN node features that are communicated between different machines. Given that this compression affects the accuracy of the GNN, in this paper we introduce a variable compression scheme that trades off the communication overhead needed to train a GNN distributively and the accuracy of the GNN.

\begin{figure*}[t]
	\centering
	\includegraphics[width = \textwidth]{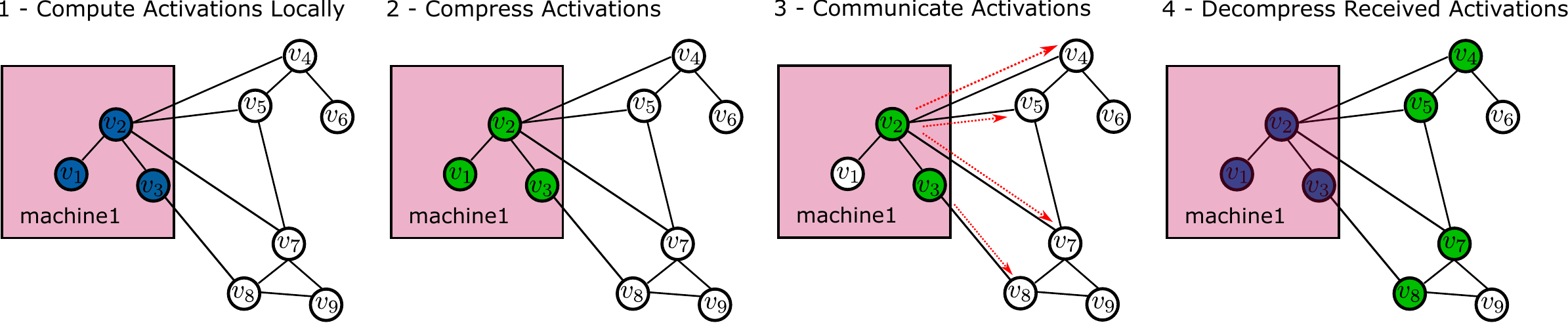} 
	\caption{To compute a gradient step, we need to gather the data. To do so, each machine starts by \textbf{computing} the activations of the local nodes. Then, these activations are \textbf{compressed} and \textbf{communicated} to adjacent machines. Once all the activations are communicated, each machine \textbf{decompresses} the data from the compressed nodes.  }
		\label{fig:algorithm}
\end{figure*} 

The contributions of this paper are:
\begin{enumerate}
	\item We present a novel algorithm to learn graph representations while compressing the data communicated between the training agents. We propose to vary the compression rate progressively, to achieve a comparable performance to the no-compression case at a fraction of the communication cost. 
    \item We present a method that does not require a specific graph partitioning setup, allowing it to be used when the graph partitioning cannot be controlled. We validate this thrust empirically by training on random graph partitioning. 
	\item We theoretically show that our method converges to a first-order stationary point of the full graph training problem while taking distributed steps and compressing the inter-server communications. 
	\item We empirically show that our method attains a comparable performance to the full communication training scenario while incurring fewer communication costs. In particular, by plotting accuracy as a function of the communication costs, our method outperforms full communication and fixed compression rates in terms of accuracy achieved per communicated byte. 
\end{enumerate}
\section*{Related work}
\textbf{Mini-Batch Training.} In the context of distributed GNN training, \cite{zheng2020distdgl} proposes to distribute mini-batches between a set of machines, each of which computes a local gradient, updates the GNN, and communicates it back to the server. \cite{zheng2020distdgl} uses METIS \cite{karypis1998fast} to partition the graph, which reduces communication overhead and balances the computations between machines. Although \cite{zheng2020distdgl} provides good results in practice, the GNN does not process data on the full graph, only a partition of it, which can yield sub-optimal results compared to processing the full graph.
In \cite{kaler2023communication} they employ a policy to cache data associated with frequently accessed vertices in remote partitions. This method reduces the communications across partitions by creating local copies of the data.  

\textbf{Memory Optimization.} In this work we do full batch training, which has also been considered in the literature. Similar to us is \textit{sequential aggregation and rematerialization} \cite{mostafa2022sequential}, which sequentially re-constructs and frees pieces of the large GNN during the backward pass computation. Even in densely connected graphs, this deals with the memory limitations of the GNN, showing that the memory requirements per worker decrease linearly with the number of workers. Others have studied similar approaches in the context of distributed training \cite{mostafa2023fastsample}. In \cite{md2021distgnn} they propose to use a balanced partitioning of the graph, as well as a shared memory implementation. They utilize a delayed partial aggregation of remote partitions by either ignoring or delaying feature vector aggregation during training. 

\textbf{Quantization.} A related approach to dealing with the limitations of a single server for training compression in communication is quantization in the architecture. To train GNNs more efficiently using quantization, the most salient examples are feature quantization \cite{ma2022bifeat}, Binary
Graph Neural Networks \cite{bahri2021binary}, vector quantization \cite{ding2021vq}, last bit quantization \cite{feng2020sgquant}, and degree quantization \cite{tailor2020degree}. There are also works on adaptive quantization of the messages between machines. In \cite{wan2023adaptive}, they adapt the quantization level using stochastic integer quantization. However similar, this work adapts the quantization level locally, at the message level, which differs from our global view of compression. In all, although related in spirit, quantizing a GNN is a local operation that differs from compressing the communication between servers.

\textbf{Sampling Based Methods}
Our method can be applied in conjunction with sampling-based methods. In sampling-based methods, each node only aggregates from a random subset of its neighbors \cite{NEURIPS2021_a378383b,bai2021efficient,serafini2021scalable,liu2021pick}. In the context of distributed learning, this random subset could include remote nodes. Therefore communication between machines becomes a bottleneck, and our method would still be relevant to reduce the communication volume. If we bias the sampling to only consider local nodes, then this would hurt performance, as it is equivalent to splitting the graph into multiple disconnected components, which does not work well in practice.

\section{Learning Graph Neural Networks}

In this paper, we consider the problem of training GNNs on graphs that are stored in a set of machines. Formally, a graph is described by a set of vertices and edges $\mathcal{G} = ({\bf V},{\bf E})$, where $|\bf V|=n$ is the set of vertices whose cardinality is equal to the number of nodes $n$, and ${\bf E} \subseteq {\bf V}\times{\bf V}$ is the set of edges. The graph $\mathcal{G}$ can be represented by its adjacency matrix $\bbS\in \reals^{n\times n}$. 
Oftentimes, the graph includes vertex features, $F_V \in \mathbb{R}^{|{\bf V}| \times |D_v|}$, and edge features, $F_E \in \mathbb{R}^{|{\bf E}| \times |D_v|}$, where $D_v$ and $D_E$ are the vertex and edge feature dimensions, respectively. Graph problems fall into three main categories: node-level prediction where the goal is to predict properties of individual nodes; edge-level prediction where the goal is to predict edge features or predict the locations of missing edges; and graph-level prediction where the goal is to predict properties of entire graphs. In this paper, we focus on distributed training of GNNs for node classification problems. Our distributed training approach is still relevant to the other two types of graph problems, as they also involve a series of GNN layers, followed by readout modules for edge-level or graph-level tasks.

A GNN is a multi-layer architecture, where at each layer, messages are aggregated between neighboring nodes via graph convolutions. Formally, given a graph signal $\bbx \in \reals^n$, where $[\bbx]_i$ represents the value of the signal at node $i$, the graph convolution can be expressed as
\begin{align}\label{eqn:graph_convolution}
	\bbz_n = \sum_{k=0}^{K-1}h_k \bbS^k x_n, 
\end{align}
where $\bbh=[h_0,\dots,h_{K-1}]\in\reals^K$ are the graph convolutional coefficients. In the case that $K=2$, and $\bbS$ is binary, the graph convolution \eqref{eqn:graph_convolution} translates into the \texttt{AGGREGATE} function in the so-called \textit{message-passing neural networks}. 
A GNN is composed of $L$ layers, each of which is composed of a graph convolution \eqref{eqn:graph_convolution} followed by a point-wise non-linear activation function $\non$ such as \texttt{ReLU}, or \texttt{tanh}. The $l$-th layer of a GNN can be expressed as, 
\begin{align}\label{eqn:gnn_layer_l}
	\bbX_l = \non\bigg(\sum_{k=0}^{K-1} \bbS^k\bbX_{l-1}\bbH_{l,k}\bigg),
\end{align}
where $\bbX_{l-1}\in \reals^{n\times F_{l-1}}$ is the output of the previous layer (with $\bbX_{0}$ is equal to the input data $\bbX=[\bbx_1,\dots,\bbx_n]\in\reals^{n\times F_0}$), and $\bbH_{l,k}\in \reals ^{F_{l-1}\times F_{l}}$ are the convolutional coefficients of layer $l$ and hop $k$. We group all learnable coefficients $\ccalH=\{\bbH_{l,k}\}_{l,k}$, and define the GNN as $\bbPhi(\bbx,\bbS,\ccalH)=\bbX_L$. 

\subsection{Empirical Risk Minimization}

We consider the problem of learning a GNN that given an input graph signal $\bbx\subset\ccalX\subseteq\reals^n$ can predict an output graph signal $\bby\subset\ccalY\subseteq\reals^n$ of an unknown distribution $p(X,Y)$, 
\begin{align}\label{eqn:SRM}
	\tag{SRM}
	\minimize_\ccalH \mbE_{p}[ \ell(\bby,\bbPhi(\bbx,\bbS,\ccalH))],
\end{align}
where $\ell$ is a non-negative loss function $\ell:\reals^d\times\reals^d\to\reals^+$, such that $\ell(\bby, \bby)=0$ iif $\bby=\bby$. Common choices for the loss function are the cross-entropy loss and the mean square error.
Problem \eqref{eqn:SRM} is denoted called Statistical Risk Minimization \cite{vapnik2013nature}, and the choice of GNN for a parameterization is justified by the structure of the data, and the invariances in the graph \cite{bronstein2017geometric}. However, problem \eqref{eqn:SRM} cannot be solved in practice given that we do not have access to the underlying probability distribution $p$. In practice, we assume to be equipped with a dataset $\ccalD=\{x_i,y_i\}_i$ drawn i.i.d. from the unknown probability $p(X,Y)$ with $|\ccalD|=m$ samples. We can therefore obtain the empirical estimator of \eqref{eqn:SRM} as, 
\begin{align}\label{eqn:ERM}
	\tag{ERM}
	\minimize_\ccalH \mbE_\ccalD[ \ell(\bby,\bbPhi(\bbx,\bbS,\ccalH))]:=\frac{1}{m}\sum_{i=1}^m \ell(\bby_i,\bbPhi(\bbx_i,\bbS,\ccalH)).
\end{align}
The empirical risk minimization problem \eqref{eqn:ERM} can be solved in practice given that it only requires access to a set of samples $\ccalD$. 
The solution to problem \eqref{eqn:ERM} will be close to \eqref{eqn:SRM} given that the samples are i.i.d., and that there is a large number of samples $m$\cite{vapnik2013nature}. To solve problem \eqref{eqn:ERM}, we will resort to gradient descent, and we will update the coefficients $\ccalH$ according to, 
\begin{align}\label{eqn:SGD}
	\tag{SGD}
	\ccalH_{t+1} = \ccalH_{t} -\eta_t \mbE_\ccalD[\nabla_\ccalH \ell(\bby, f(\bbx,\bbS, \ccalH))],
\end{align}
where $t$ represents the iteration, and $\eta_t$ the step-size. In a centralized manner computing iteration \eqref{eqn:SGD} presents no major difficulty, and it is the usual choice of algorithm for training a GNN. When the size of the graph becomes large, and the graph data is partitioned across a set of agents, iteration \eqref{eqn:SGD} requires communication. 
In this paper, we consider the problem of solving the empirical risk minimization problem \eqref{eqn:ERM} through gradient descent \eqref{eqn:SGD} in a decentralized and efficient way.


\section{Distributed GNN training}

Consider a set $\ccalQ,|\ccalQ|=Q$ of workers that jointly learn a single GNN $\bbPhi$. Each machine $q\in\ccalQ$ is equipped with a subset of the graph $\bbS$, and node data, as shown in Figure~\ref{fig:algorithm}. Each machine is responsible for computing the features of the nodes in its local partitions for all layers in the GNN. The GNN model $\ccalH$ is replicated across all machines. To learn a GNN, we update the weights according to gradient descent \eqref{eqn:SGD}, and average them across machines. This procedure is similar to the standard \texttt{FedAverage} algorithm used in \textit{Federated Learning}\cite{li2020federated}.

The gradient descent iteration \eqref{eqn:SGD} cannot be computed without communication given that the data in $(\bby,\bbx)_i$ is distributed in the $Q$ machines. To compute the gradient step \eqref{eqn:SGD}, we need the machines to communicate graph data. What we need to communicate is the data of each node in the adjacent machines; transmit the input feature $x_j$ for each neighboring node $j\in \ccalN_i^k, j \in q^{'}$. For each node that we would like to classify, we would require all the graph data belonging to the $k$-hop neighborhood graph. This procedure is costly and grows with the size of the graph, which renders it unimplementable in practice. 

As opposed to communicating the node features, and the graph structure for each node in the adjacent machine, we propose to communicate the features and activation of the nodes at the nodes in the boundary. Note that in this procedure the bits communicated between machines do not depend on the number of nodes in the graph and the number of features compressed can be controlled by the width of the architecture used (see Appendix \ref{appendix:CompressionMechanism}). The only computation overhead that this method requires is computing the value of the GNN at every layer for a given subset of nodes using local information only. 

\subsection{Computing the Gradient Using Remote Compressed Data}

Following the framework of communicating the intermediate activation, to compute a gradient step \eqref{eqn:SGD}, we require $3$ communication rounds (i) the forward pass, in which each machine fetches the feature vectors of remote neighbors and propagates their messages to the nodes in the local partition, (ii) the backward pass, in which the gradients flow in the opposite direction and are accumulated in the GNN model weights, and  (iii) the aggregation step in which the weight gradients are summed across all machines and used to update the GNN model weights. The communication steps for a single GNN layer are illustrated in Figure~\ref{fig:algorithm}. 

To compute a gradient step, we first compute the forward pass, for which we need the output of the GNN at a node $i$. To compute the output of a GNN at node $i$, we need to evaluate the GNN according to the graph convolution \eqref{eqn:graph_convolution}. Note that to evaluate \eqref{eqn:graph_convolution}, we require access to the value of the input features $x_j$ for each $j\in \ccalN_i^k$, which might not be on the same client as $i$. In this paper, we propose that the clients with nodes in $\ccalN_i^k$, compute the forward passes \textit{locally} (i.e. using only the nodes in their client), and communicate the compressed activations for each layer $l$.

\begin{algorithm*}[t]
	\caption{\algo:\ Distributed Graph Training with \textbf{VAR}iable  \textbf{CO}mmunication Rates}\label{alg:varying_compr_rates}
	\begin{algorithmic}
		\State Split graph $\ccalG$ into $Q$ partitions and assign them to each worker $q_i$, initialize the GNN with weights $\ccalH_0$ and distribute it to all workers $q_i$, and fix initial compression rate $c_0$, and scheduler $r$
		\Repeat
		\State {\textbf{Each Worker $q_i$}: Compute the activations for each node in the local graph (cf. equation \eqref{eqn:gnn_layer_l}) using the local nodes.}
		\State {\textbf{Each Worker $q_i$}: Compress the activations of the nodes that are in the boundary using the compression mechanism (cf. equation \eqref{eqn:compress_decompress}), and communicate them to the adjacent machines.}
		\State {\textbf{Each Worker $q_i$}: Collect data from all adjacent machines, and compute forward pass by using non-compressed activations for local nodes and compressed activations for non-local nodes that are fetched from other machines.}
		\State {\textbf{Each Worker $q_i$}: Compute parameter gradients by back-propagating errors across machines and through the differentiable compression routine. Apply the gradient step to the parameters in each worker(cf. equation \eqref{eqn:SGD})}. 
		\State {\textbf{Server}: Average parameters, send them back to workers, and update compression rate $c_{t+1}$}
		\Until{Convergence}
	\end{algorithmic}
\end{algorithm*}

Once the values of the activations are received, they are decompressed, processed by the local machine, and the output of the GNN is obtained. To obtain the gradient of the loss $\ell$, the output of the GNN is compared to the true label, and the gradient with respect to the parameters of the GNN is computed. The errors are once again compressed and communicated back to the clients. Which, will update the values of the parameters of the GNN, after every client updates the values of the parameters $\ccalH$, there is a final round of communication where the values are averaged. Note that this procedure allows the GNN to be updated using the whole graph. By controlling the number of bits communicated by the compressing-decompressing mechanism, we can control the communication overhead. Formally, the compressing and decompressing mechanism can be modeled as follows,
\begin{definition}\label{def:CompressionDecompression}
	The compression and decompression mechanism $g_{\epsilon,r},g_{\epsilon,r}^{-1}$ with compression error $\epsilon$, and compression rate $r$,  satisfies that given a set of parameters $\bbx\in\reals^n$, when compressed and decompressed, the following relation holds i.e.,
	\begin{align}\label{eqn:compress_decompress}
		&\bbz= g_{\epsilon,r} (\bbx), \text{ and }\tilde \bbx = g_{\epsilon,r}^{-1}(g_{\epsilon,r}( \bbx))\nonumber \\
  &\text{ and }\mbE[\|\tilde \bbx - \bbx\|]\leq \delta \text{ with }\mbE[||\tilde \bbx - \bbx||^2]\leq\epsilon^2,
	\end{align}
	where $\bbz\in \reals^{n/r}$ with $n/r\in\mathbb{Z}$ is the compressed signal. If $\delta=0$, the compression is lossless. 
\end{definition}
Intuitively, the error $\epsilon$, and compression rate $r$ are related; a larger compression rate $r$ will render a larger compression error $\epsilon$. Also, compressed signals require less bandwidth to be communicated. 
Relying on the compression mechanism \eqref{eqn:compress_decompress}, a GNN trained using a \textit{fixed} compression ratio $r$ during training, will converge to a neighborhood of the optimal solution. The size of the neighborhood will be given by the variance of the compression error $\epsilon^2$. The first-order compression error $\delta$ is related to the mismatch in the compressed and decompressed signals. Our analysis works for any value of $\delta$, which encompasses both lossy ($\delta>0$), as well as loss-less compression ($\delta=0$).
\begin{assumption}\label{as:Loss_Grad_Lipschitz}
	The loss $\ell$ function has $L$ Lipschitz continuous gradients, $||\nabla\ell(\bby_1,\bbx)- \nabla\ell(\bby_2,\bbx)||\leq L||\bby_1-\bby_2||$.
\end{assumption}
\begin{assumption}\label{as:normalized_lipschitz}
	The non-linearity $\non$ is normalized Lipschitz.
\end{assumption}
\begin{assumption}\label{as:GNN_lipschitz}
	The GNN, and its gradients are $M$-Lipschitz with respect to the parameters $\ccalH$.
\end{assumption}
\begin{assumption}\label{as:filter_bounded}
	The graph convolutional filters in every layer of the graph neural network are bounded, i.e.
	\begin{align}
		&||h_{*\bbS}x|| \leq ||x|| \lambda_{max} \bigg(\sum_{t=0}^T h_t \bbS^t\bigg),\nonumber\\
  &\text{ with } \lambda_{max} \bigg(\sum_{t=0}^T h_t \bbS^t\bigg)<\infty.
	\end{align}
\end{assumption}
%
Assumption \ref{as:Loss_Grad_Lipschitz} holds for most losses used in practice over a compact set. 
Assumption \ref{as:normalized_lipschitz} holds for most non-linearities used in practice over normalized signals. Assumption \ref{as:GNN_lipschitz} is a standard assumption, and the exact characterization is an active area of research \cite{fazlyab2019efficient}. Assumption \ref{as:filter_bounded} holds in practice when the weights are normalized.
\begin{proposition}[Convergence of GNNs Trained on Fixed Compression]\label{prop:fixed_compression}
	Under assumptions \ref{as:Loss_Grad_Lipschitz} through \ref{as:filter_bounded}, consider the iterates generated by equation \eqref{eqn:SGD} where the normalized signals $\bbx$ are compressed using compression rate $c$ with corresponding error $\epsilon$(cf. Definition \ref{def:CompressionDecompression}). Consider an $L$ layer GNN with $F$, and $K$ features and coefficients per layer respectively. Let the step-size  be $\eta\leq 1/\lipGrad$, with $\lipGrad=4M\lambda_{max}^L\sqrt{KFL}$ if the compression error is such that at every step $k$, and any $\beta>0$,
	\begin{align}
		\mbE_\ccalD[||\nabla_\ccalH \ell (y,\Phi(x,\bbS;\ccalH_t)) ||^2] \geq \lipGrad^2\epsilon^2  +\beta,
	\end{align}
	then the fixed compression mechanism converges to a neighborhood of the first-order stationary point of \ref{eqn:SRM} in $K\leq \ccalO(\frac{1}{\beta})$ iterations, i.e., 
	\begin{align}
		\mbE_\ccalD[|| \nabla_\ccalH \ell (y,\Phi(x,\bbS;\ccalH_t))||^2]\leq \lipGrad^2\epsilon^2 +\beta.
	\end{align}
\end{proposition}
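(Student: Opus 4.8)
The plan is to read the compressed-communication update as a biased full-batch gradient-descent iteration and to run a standard nonconvex descent argument in which the gradient perturbation is controlled by the compression error of Definition \ref{def:CompressionDecompression}. The first ingredient I would establish, as a separate lemma, is that the objective $\mathcal{L}(\ccalH):=\mbE_\ccalD[\ell(\bby,\bbPhi(\bbx,\bbS,\ccalH))]$ has $\lipGrad$-Lipschitz gradient in $\ccalH$ with exactly the claimed constant $\lipGrad=4M\lambda_{max}^L\sqrt{KFL}$. This is where Assumptions \ref{as:Loss_Grad_Lipschitz}--\ref{as:filter_bounded} enter: the loss gradient is $L$-Lipschitz in the prediction, the nonlinearity $\non$ is normalized Lipschitz so it does not amplify signals, each graph-convolution layer is controlled by the filter bound $\lambda_{max}$, and the GNN together with its parameter gradient is $M$-Lipschitz. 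Composing these bounds through the chain rule across the $L$ layers, $K$ hops and $F$ features per layer produces the $\lambda_{max}^L\sqrt{KFL}$ scaling and fixes the admissible step-size $\eta\le 1/\lipGrad$.

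Next I would quantify the gradient error injected by compression. Let $g_t=\mbE_\ccalD[\nabla_\ccalH\ell(\bby,\bbPhi(\bbx,\bbS,\ccalH_t))]$ be the true gradient and $\tilde g_t$ the gradient computed from the decompressed activations, with $e_t:=\tilde g_t-g_t$. The goal is to push the input-level bounds $\mbE[\|\tilde\bbx-\bbx\|]\le\delta$ and $\mbE[\|\tilde\bbx-\bbx\|^2]\le\epsilon^2$ from \eqref{eqn:compress_decompress} through the same Lipschitz chain used for smoothness, yielding a second-moment bound $\mbE[\|e_t\|^2]\le\lipGrad^2\epsilon^2$ together with a first-order (bias) bound on $\|\mbE[e_t]\|$ scaling with $\delta$; since Jensen gives $\delta\le\epsilon$, the bias lives on the same $\epsilon$-scale and does not enlarge the final neighborhood.

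The convergence then follows from the descent lemma applied to $\ccalH_{t+1}=\ccalH_t-\eta\tilde g_t$. Substituting $\tilde g_t=g_t+e_t$, taking expectation over the compression randomness, and using $\eta\le 1/\lipGrad$ so that $\tfrac{\lipGrad\eta^2}{2}\le\tfrac{\eta}{2}$, I would reach a one-step inequality of the form $\mbE[\mathcal{L}(\ccalH_{t+1})]\le\mathcal{L}(\ccalH_t)-\tfrac{\eta}{4}\|g_t\|^2+\tfrac{\eta}{4}\lipGrad^2\epsilon^2$, where the cross term $\langle g_t,\mbE[e_t]\rangle$ has been absorbed into the $\|g_t\|^2$ term by Young's inequality at the cost of the $\delta$-bias. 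Invoking the running hypothesis $\mbE_\ccalD[\|\nabla_\ccalH\ell\|^2]\ge\lipGrad^2\epsilon^2+\beta$, each such iterate strictly decreases $\mathcal{L}$ by a fixed multiple of $\beta$; since $\ell\ge 0$ forces $\mathcal{L}\ge 0$, telescoping the decrease bounds the number of iterates that can violate the target by $\ccalO(\mathcal{L}(\ccalH_0)/(\eta\beta))=\ccalO(1/\beta)$, so within that many steps we must have $\mbE_\ccalD[\|\nabla_\ccalH\ell\|^2]\le\lipGrad^2\epsilon^2+\beta$.

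The hardest part will be the second step: rigorously propagating the compression error through the full multi-layer, multi-machine forward and backward pass to obtain precisely the $\lipGrad^2\epsilon^2$ bound. Because compression is reapplied to the boundary activations at every layer, the perturbation is re-injected and potentially re-amplified layer by layer and also travels through backpropagation into the parameter gradient; one must check that the normalized-Lipschitz nonlinearity (Assumption \ref{as:normalized_lipschitz}) and the filter bound (Assumption \ref{as:filter_bounded}) keep this accumulation from diverging and collapse it into the single constant $\lipGrad$. Cleanly controlling the first-order term $\delta$ in the cross product, so that the limiting neighborhood depends only on $\epsilon$ and not on $\delta$, is the remaining delicate point.
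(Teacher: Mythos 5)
Your proposal follows essentially the same route as the paper's proof: the paper likewise establishes Lipschitz lemmas (its Lemmas 1--3, giving smoothness of the loss in $\ccalH$ and the propagation of the input perturbation into the parameter gradient, $\|\nabla_\ccalH\ell(\bby,\Phi(\tilde\bbx,\bbS;\ccalH))-\nabla_\ccalH\ell(\bby,\Phi(\bbx,\bbS;\ccalH))\|\le\lipGrad\|\tilde\bbx-\bbx\|$), then proves a one-step descent (submartingale) inequality, and finally telescopes against a stopping time using $\ell\ge 0$ to get $\mbE[K]\le\ccalO(1/\beta)$ — exactly your three ingredients.

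The one place where your plan is lossier than necessary is the cross term. You propose Young's inequality to absorb $\langle g_t,\mbE[e_t]\rangle$, which both drags the bias $\delta$ into the argument and inflates the error constant: carried out carefully it yields a one-step bound of the form $-\tfrac{\eta}{4}\|g_t\|^2+c\,\eta\lipGrad^2\epsilon^2$ with $c>1$, hence convergence only to a constant-factor-larger neighborhood than the stated $\lipGrad^2\epsilon^2+\beta$. The paper instead expands exactly: with $\eta\le 1/\lipGrad$, $-\eta\langle g_t,\tilde g_t\rangle+\tfrac{\lipGrad\eta^2}{2}\|\tilde g_t\|^2\le -\tfrac{\eta}{2}\left(2\langle g_t,\tilde g_t\rangle-\|\tilde g_t\|^2\right)=-\tfrac{\eta}{2}\left(\|g_t\|^2-\|\tilde g_t-g_t\|^2\right)$, so the cross term cancels identically, $\delta$ never appears anywhere in the proof, and the second-moment bound $\mbE[\|\tilde g_t-g_t\|^2]\le\lipGrad^2\epsilon^2$ alone gives the exact neighborhood claimed. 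If you replace your Young step with this identity, your argument reproduces the paper's proof, including the constants.
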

\begin{proof}
	The proof can be found in Appendix \ref{appendix:proposition_fixed_compression}
\end{proof}


\begin{figure*}
	\begin{subfigure}{0.5\textwidth}
		\centering
		\includegraphics[width=\linewidth]{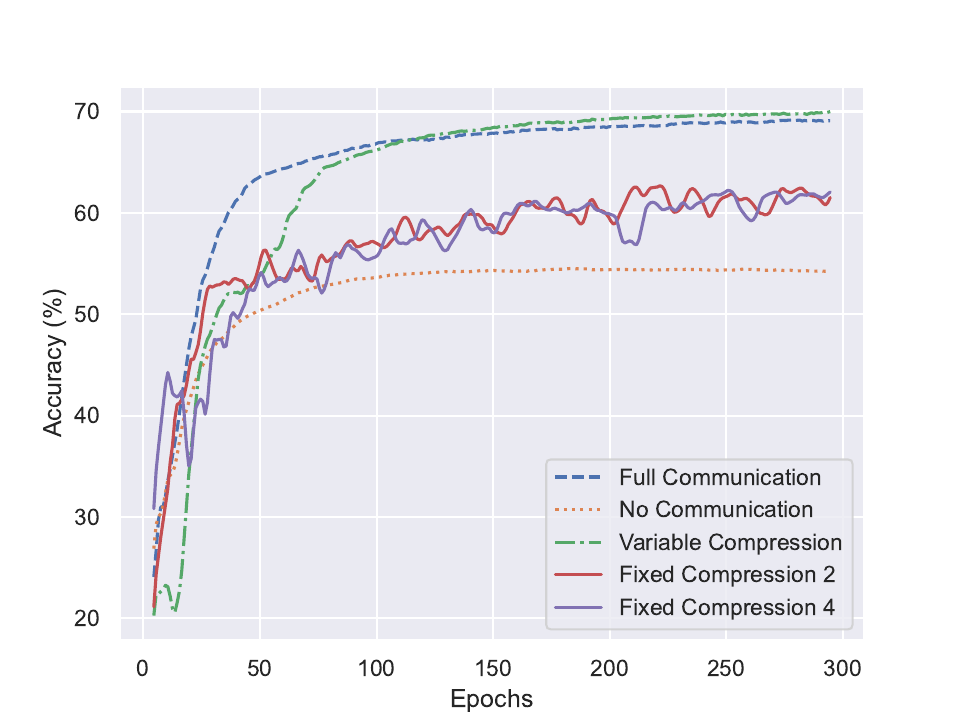}
		\caption{Arxiv Dataset}
		\label{fig:acc_arxiv}
	\end{subfigure}%
	\begin{subfigure}{0.5\textwidth}
		\centering
		\includegraphics[width=\linewidth]{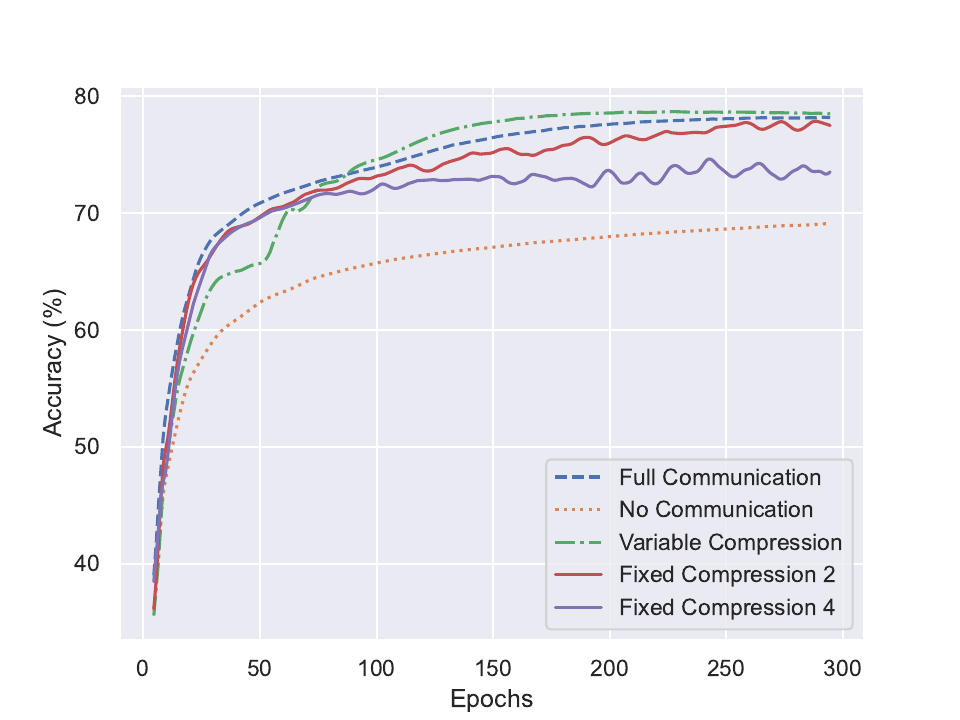}
		\caption{Products Dataset}
		\label{fig:acc_prods}
	\end{subfigure}
	\caption{Accuracy per iteration for random partitioning with $16$ servers.}
	\label{fig:acc_random_16}
\end{figure*}

Proposition \ref{prop:fixed_compression} is important because it allows us to show that the fixed compression mechanism can converge to a neighborhood of the first-order stationary point of \ref{prop:fixed_compression}. The size of the neighborhood can be controlled by the compression rate $\epsilon$.
Although useful, Proposition \ref{prop:fixed_compression}, presents a limitation on the quality of the solution that can be obtained through compression. In what follows we introduce a variable compression scheme that can trade-off between efficient communication and sub-optimality guarantees. 

\section{VARCO - Variable Compression For Distributed GNN Learning}
In this paper, we propose variable compression rates as a way to close the gap between training in a centralized manner and efficient training in a distributed manner. We use Proposition \eqref{prop:fixed_compression} as a stepping stone towards a training mechanism that reduces the compression ratio $r_t$ as the iterates progress. We begin by defining a scheduler $r(t):\mathbb{Z}\to \reals$ as a strictly decreasing function that given a train step $t\in \mathbb{Z}$, returns a compression ratio $r(t)$, such that $r(t^{'})< r(t)$ if $t^{'}> t$. 
The scheduler $r$ will be in charge of reducing the compression ratio as the iterates increase. An example of scheduler can be the linear $r_{lin}(t)=\frac{c_{min}-c_{max}}{T} t + c_{max}$ scheduler (see Appendix \ref{prop:scheduler}).
In this paper, we propose to train a GNN by following a compression scheme by a scheduler $r$, given a GNN architecture, $Q$-clients, and a dataset $\ccalD$. 

\begin{figure*}
	\begin{subfigure}{0.24\textwidth}
		\centering
		\includegraphics[width=\linewidth]{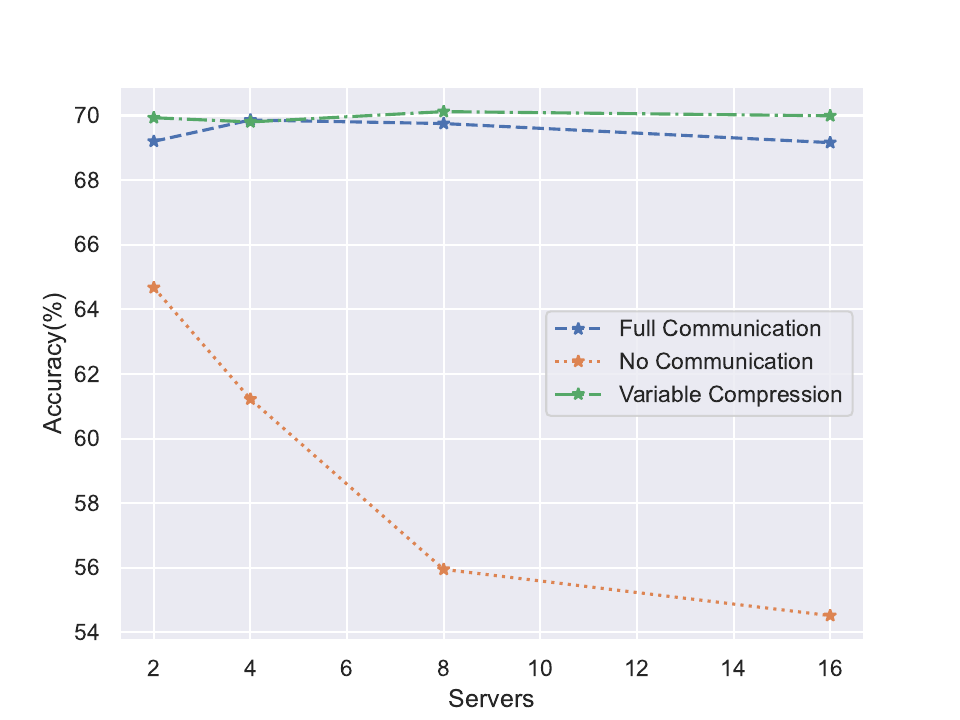}
		\caption{Random Part. in Arxiv Dat.}
		\label{subfig:ServerRandomArxiv}
	\end{subfigure}
	\begin{subfigure}{0.24\textwidth}
		\centering
		\includegraphics[width=\linewidth]{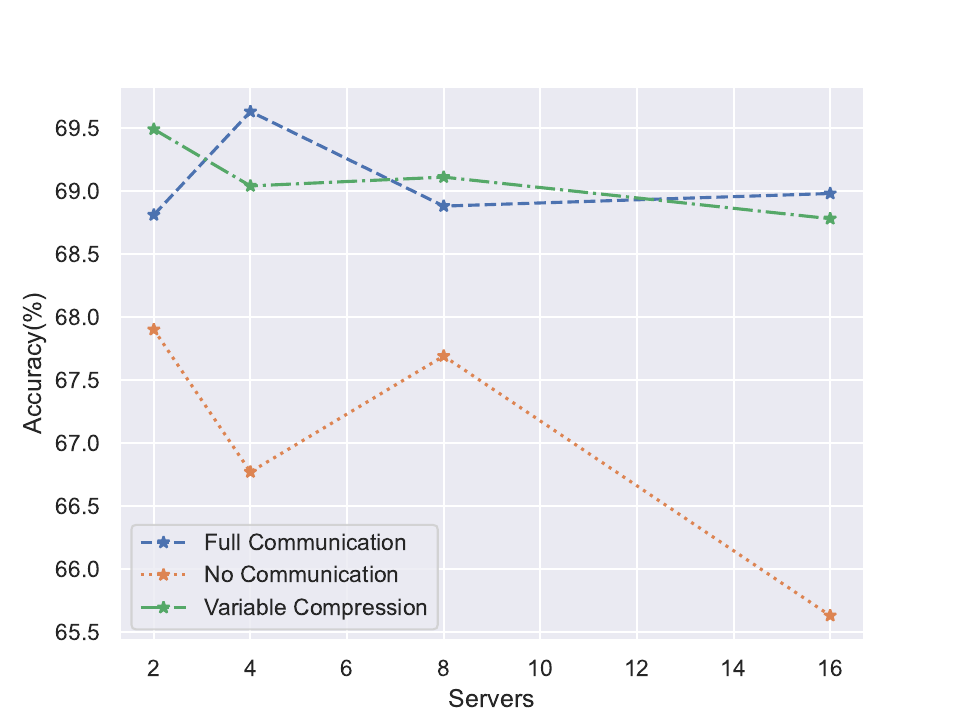}
		\caption{METIS Part. in Arxiv Dat.}
		\label{subfig:ServerMetisArxiv}
	\end{subfigure}
	\begin{subfigure}{0.24\textwidth}
		\centering
		\includegraphics[width=\linewidth]{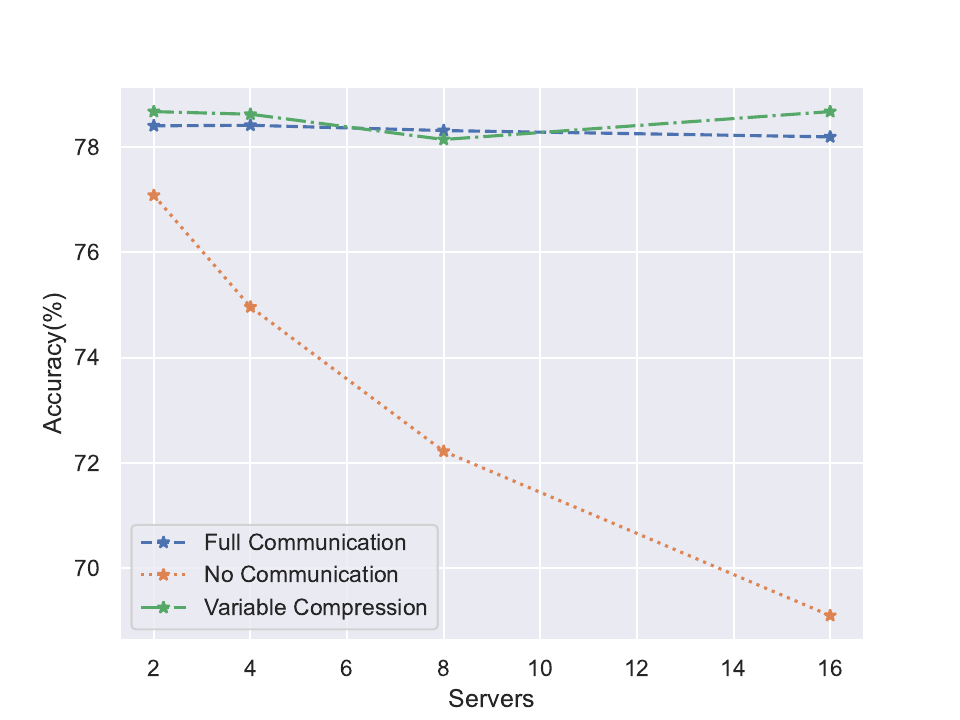}
		\caption{Random Part. in Products Dat.}
		\label{subfig:ServerRandomProds}
	\end{subfigure}
         \begin{subfigure}{0.24\textwidth}
		\centering
		\includegraphics[width=\linewidth]{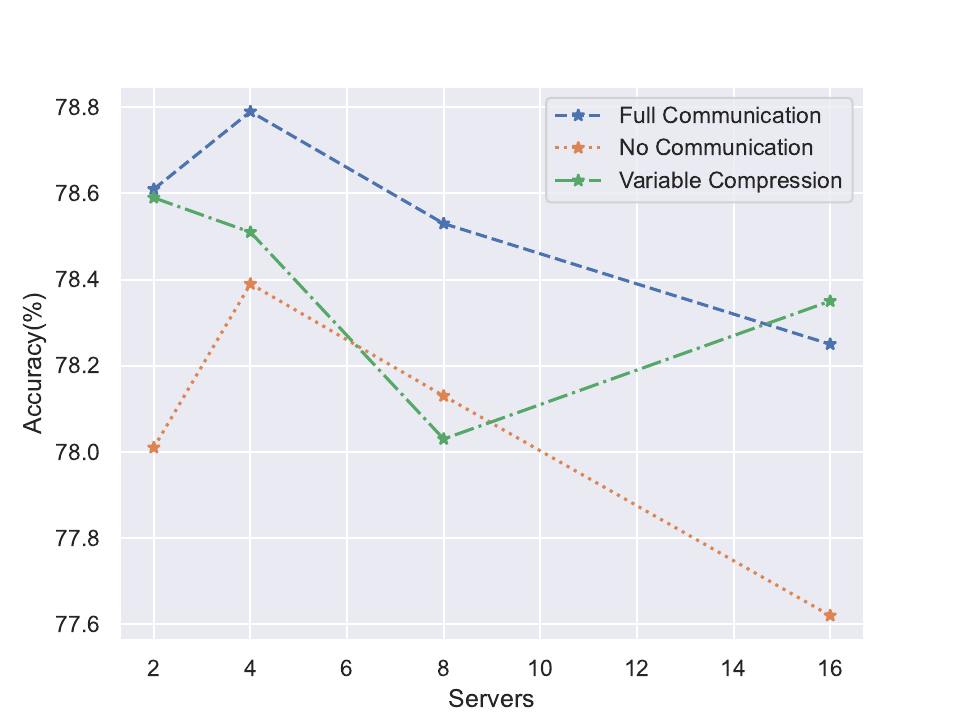}
		\caption{Metis Part. in Products Dat.}
		\label{subfig:ServerMetisProds}
	\end{subfigure}
	\caption{Accuracy as a function of the number of servers.}
	\label{fig:servers}
\end{figure*}

During the forward pass, we compute the output of the GNN at a node $n_i$ using the local data, and the compressed data from adjacent machines. The compressed data encompasses both the features at the adjacent nodes, as well as the compressed activations of the intermediate layers for that node. The compression mechanism compresses the values of the GNN using scheduler $r(t)$, and communicates them the to machine that owns node $n$. The backward pass receives the gradient from the machine that owns node $n$ and updates the values of the GNN. After the GNN values are updated, the coefficients of the GNN are communicated to a centralized agent that averages them and sends them back to the machines. A more succinct explanation of the aforementioned procedure can be found in Algorithm \ref{alg:varying_compr_rates}.

\subsection{Convergence of the VARCO Algorithm}
We characterize the convergence of the $\algo$ algorithm in the following proposition.

\begin{proposition}[Scheduler Convergence]\label{prop:scheduler}
	Under assumptions \ref{as:Loss_Grad_Lipschitz} through \ref{as:filter_bounded}, consider the iterates generated by equation \eqref{eqn:SGD} where the normalized signals $\bbx$ are compressed using compression rate $r$ with corresponding error $\epsilon$(cf. Definition \ref{def:CompressionDecompression}). Consider an $L$ layer GNN with $F$, and $K$ features and coefficients per layer respectively. Let the step-size  be $\eta\leq 1/\lipGrad$, with $\lipGrad=4M\lambda_{max}^L\sqrt{KFL}$.
	Consider a scheduler such that the compression error $\epsilon_t$ decreases at every  step $\epsilon_{t+1}<\epsilon_t$, then for any $\sigma>0$
	\begin{align}
		\mbE_\ccalD[||\nabla_\ccalH \ell (y,\Phi(x,\bbS;\ccalH_t)) ||^2]\leq \sigma . 
	\end{align}
	happens infinitely often.
\end{proposition}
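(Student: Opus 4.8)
The plan is to reuse the one-step descent estimate that already underlies Proposition \ref{prop:fixed_compression} and then run an ``infinitely often'' argument by contradiction. Writing $\ccalL(\ccalH_t):=\mbE_\ccalD[\ell(\bby,\bbPhi(\bbx,\bbS,\ccalH_t))]$ for the objective and $\ccalL_t$ for its total expectation at iteration $t$, the key fact I would extract from the proof of Proposition \ref{prop:fixed_compression} is that, because $\ccalL$ has $\lipGrad$-Lipschitz gradients (a consequence of Assumptions \ref{as:Loss_Grad_Lipschitz}--\ref{as:filter_bounded}) and the compressed gradient differs from the true gradient by an error whose second moment is at most $\lipGrad^2\epsilon_t^2$, the step-size choice $\eta\leq 1/\lipGrad$ yields a descent inequality of the form
\begin{align}\label{eqn:plan_descent}
	\tfrac{\eta}{2}\,\mbE_\ccalD[\|\nabla_\ccalH\ell(\bby,\bbPhi(\bbx,\bbS,\ccalH_t))\|^2] \leq \ccalL_t - \ccalL_{t+1} + \tfrac{\eta}{2}\lipGrad^2\epsilon_t^2 .
\end{align}
First I would reproduce \eqref{eqn:plan_descent}; it is exactly the estimate from which the fixed-compression neighborhood $\lipGrad^2\epsilon^2$ of Proposition \ref{prop:fixed_compression} is read off, so no fresh analysis of the GNN layers is required at this point.

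Second, I would exploit the scheduler. Since the compression errors satisfy $\epsilon_{t+1}<\epsilon_t$ and are bounded below by $0$, the sequence $\{\epsilon_t\}$ converges; under a scheduler that drives the compression rate toward the lossless regime the limit is $0$, so $\epsilon_t\to 0$. Consequently, for any fixed $\sigma>0$ there exists an index $T_1$ with $\lipGrad^2\epsilon_t^2<\sigma/2$ for all $t\geq T_1$. This is the one ingredient genuinely new relative to the fixed-rate case, and it is what upgrades the neighborhood guarantee into a statement valid for \emph{every} $\sigma>0$.

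Third, the contradiction. Suppose the event $\mbE_\ccalD[\|\nabla_\ccalH\ell\|^2]\leq\sigma$ occurred only finitely often; then there is $T_2\geq T_1$ with $\mbE_\ccalD[\|\nabla_\ccalH\ell(\bby,\bbPhi(\bbx,\bbS,\ccalH_t))\|^2]>\sigma$ for every $t\geq T_2$. Feeding $\sigma<\mbE_\ccalD[\|\nabla_\ccalH\ell\|^2]$ and $\lipGrad^2\epsilon_t^2<\sigma/2$ into \eqref{eqn:plan_descent} gives, for each such $t$, the per-step decrease $\tfrac{\eta\sigma}{4}<\ccalL_t-\ccalL_{t+1}$. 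Summing from $T_2$ to $T-1$ telescopes to
\begin{align}\label{eqn:plan_tele}
	(T-T_2)\,\tfrac{\eta\sigma}{4} < \ccalL_{T_2} - \ccalL_{T} \leq \ccalL_{T_2},
\end{align}
where the last bound uses $\ell\geq 0$, hence $\ccalL_{T}\geq 0$. Letting $T\to\infty$ the left side of \eqref{eqn:plan_tele} diverges while the right side is a fixed finite constant, a contradiction. Therefore $\mbE_\ccalD[\|\nabla_\ccalH\ell\|^2]\leq\sigma$ must hold for infinitely many $t$, which is the claim.

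The step I expect to be the main obstacle is establishing \eqref{eqn:plan_descent} with the precise constant $\lipGrad^2\epsilon_t^2$, i.e. certifying that a feature-level compression error $\epsilon_t$ propagates, through all $L$ layers, into a gradient error whose second moment is controlled by $\lipGrad^2\epsilon_t^2$; this is where Assumptions \ref{as:normalized_lipschitz}--\ref{as:filter_bounded} and the factor $\lipGrad=4M\lambda_{max}^L\sqrt{KFL}$ enter, and it is already carried out inside the proof of Proposition \ref{prop:fixed_compression}. Granting that estimate, the remainder is the elementary telescoping-plus-contradiction argument above, whose only extra requirement is the convergence $\epsilon_t\to 0$ supplied by the monotone scheduler.
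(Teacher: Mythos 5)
Your proof is correct, but it takes a genuinely different route from the paper's. The paper argues via stochastic-process machinery: it defines $X_t$ as the squared gradient norm multiplied by the indicator that the gradient has stayed above $\lipGrad^2\epsilon_{t'}^2+\sigma$ for all $t'\leq t$, verifies (by a contradiction argument that invokes Proposition \ref{prop:fixed_compression}) that $X_t$ is a supermartingale, applies the Martingale Convergence Theorem, and then rules out any limit $A>\sigma$ by using the decreasing scheduler together with Proposition \ref{prop:fixed_compression} once more. You instead extract the unconditional one-step descent estimate, which is indeed available in the paper: it is precisely the penultimate inequality in the proof of Lemma \ref{lemma:submartingale}, \emph{before} the condition \eqref{eqn:prop_submartingale_condition} is imposed, rearranged into the form $\tfrac{\eta}{2}\mbE[\|\nabla_\ccalH\ell\|^2]\leq \ccalL_t-\ccalL_{t+1}+\tfrac{\eta}{2}\lipGrad^2\epsilon_t^2$. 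From there your telescoping-plus-contradiction argument is elementary and sound: if the squared gradient norm exceeded $\sigma$ for all large $t$ while $\lipGrad^2\epsilon_t^2<\sigma/2$, the expected loss would drop by at least $\eta\sigma/4$ per step, contradicting $\ell\geq 0$. Your route avoids martingale theory entirely and yields, as a bonus, a quantitative hitting-time bound (at most $4\ccalL_{T_2}/(\eta\sigma)$ further iterations), mirroring the $\ccalO(1/\beta)$ rate of Proposition \ref{prop:fixed_compression}; the paper's construction phrases the result in terms of the stochastic sequence itself but at the cost of a considerably murkier supermartingale verification. One caveat applies equally to both proofs: strict monotonicity $\epsilon_{t+1}<\epsilon_t$ alone does not force $\epsilon_t\to 0$ (consider $\epsilon_t=1+1/t$), and both your argument (which needs $\lipGrad^2\epsilon_t^2<\sigma/2$ eventually) and the paper's (which needs some $t^*$ with $\lipGrad^2\epsilon_{t^*}^2+\sigma<A$ for the putative limit $A>\sigma$) implicitly require the scheduler to drive $\epsilon_t\to 0$. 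You state this extra requirement explicitly, whereas the paper leaves it implicit, so this is not a gap specific to your proposal.
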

\begin{proof}
	The proof can be found in Appendix \ref{appendix:scheduler}.
\end{proof}

According to Proposition \ref{prop:scheduler}, for any monotonically decreasing scheduler, we can obtain an iterate $t$, whose gradient has a norm smaller than $\sigma$. This is an improvement to the fixed compression Proposition \ref{prop:fixed_compression}, given that we removed the term that depends on $\epsilon^2$, converging to a smaller neighborhood. The condition on the scheduler is simple to satisfy; the compression error $\epsilon(t)$ needs to decrease on every step (see more information about schedulers in Appendix \ref{appendix:scheduler}). This means that the scheduler does not require information about the gradient of the GNN. 
Compressing the activations in a GNN can prove to reduce the communication required to train it, given that the overall communication is reduced. However, keeping a fixed compression ratio might not be enough to obtain a GNN of comparable accuracy to the one trained using no compression. By using a variable compression for the communications, we obtain the best of both worlds -- we reduce the communication overhead needed to train a GNN, without compromising the overall accuracy of the learned solution. The key observation is that in the early stages of training, an estimator of the gradient with a larger variance is acceptable, while in the later stages, a more precise estimator needs to be used. This behavior can be translated into efficiency; use more information from other servers only when needed. 

\section{Experiments}

We benchmarked our method in $2$ real-world datasets: OGBN-Arxiv \cite{wang2020microsoft} and OGBN-Products \cite{Bhatia16}. In the case of OGBN-Arxiv, the graph has $169,343$ nodes and $1,166,243$ edges and it represents the citation network between computer science arXiv papers. The node features are $128$ dimensional embeddings of the title and abstract of each paper \cite{NIPS2013_9aa42b31}. The objective is to predict which of the $40$ categories the paper belongs to. In the case of OGBN-Products, the graph represents products that were bought together on an online marketplace. 
There are $2,449,029$ nodes, each of which is a product, and $61,859,140$ edges which represent that the products were bought together. Feature vectors are $100$ dimensional and the objective is to classify each node into $47$ categories.  For each dataset, we partition the graph at random and using METIS partitioning \cite{karypis1997metis} and distribute it over $\{2,4,8,16\}$ machines. In all cases, we trained for $300$ epoch. We benchmarked $\algo$ against full communication, no intermediate communication, and fixed compression for $\{2,4\}$ fixed compression ratios. For the GNNs, we used a $3$ layered GNN with $256$ hidden units per layer,  \texttt{ReLU} non-linearity, and \texttt{SAGE} convolution \cite{hamilton2017inductive}. 
For $\algo$, we used a linear compression with slope $5$, and $128$ and $1$ maximum and minimum compression ratio respectively (see Appendix \ref{appendix:scheduler}). We empirically validate the claims that we put forward, that our method (i) attains the same accuracy as the one trained with full communication without requiring a particular type of partitioning, and (ii) for the same attained accuracy, our method is more efficient in terms of communication.

\subsection{Accuracy}
We report the accuracy over the unseen data, frequently denoted test accuracy. Regarding accuracy, we can compare the performance of our variable compression algorithm \texttt{VARCO}, against no communication, and full communication and fixed compression ratios. In Figure \ref{fig:acc_random_16} we present the accuracy as a function of the epoch for $16$ servers and random partitions of the graph. This is the most challenging scenario given that the graph is partitioned at random and the number of partitions is the largest that we experimented with. As can be seen, our method of variable compression attains a better performance than the full communication. What is more, fixed compression has a degradation in performance in both datasets, thus validating the fact that variable compression improves upon fixed compression and no communication. 

In Figure \ref{fig:servers} we show the accuracy as a function of the number of servers for variable compression, full communication, and no communication. We show results for both random \ref{subfig:ServerRandomArxiv} and METIS \ref{subfig:ServerMetisArxiv} partitioning for the Arxiv and Products datasets. As can be seen in all three plots, the variable compression attains a comparable performance to the one with full communication regardless of the partitioning scheme. 
These plots validate that our method attains a comparable performance with the one obtained with full communication both for METIS as well as random partitioning.

It is worth noting that to obtain the METIS partitioning, we need to run a min-cut algorithm which is very expensive in practice. Also, it needs to be run on a centralized computer, requiring access to the whole graph dataset. However, our algorithm does not require the graph to be partitioned in any particular way and attains the same results in random as well as METIS partitioning, as can be seen in Figures \ref{subfig:ServerRandomArxiv}, and \ref{subfig:ServerRandomProds}.

\begin{figure*}
	\begin{subfigure}{0.5\textwidth}
		\centering
		\includegraphics[width=\linewidth]{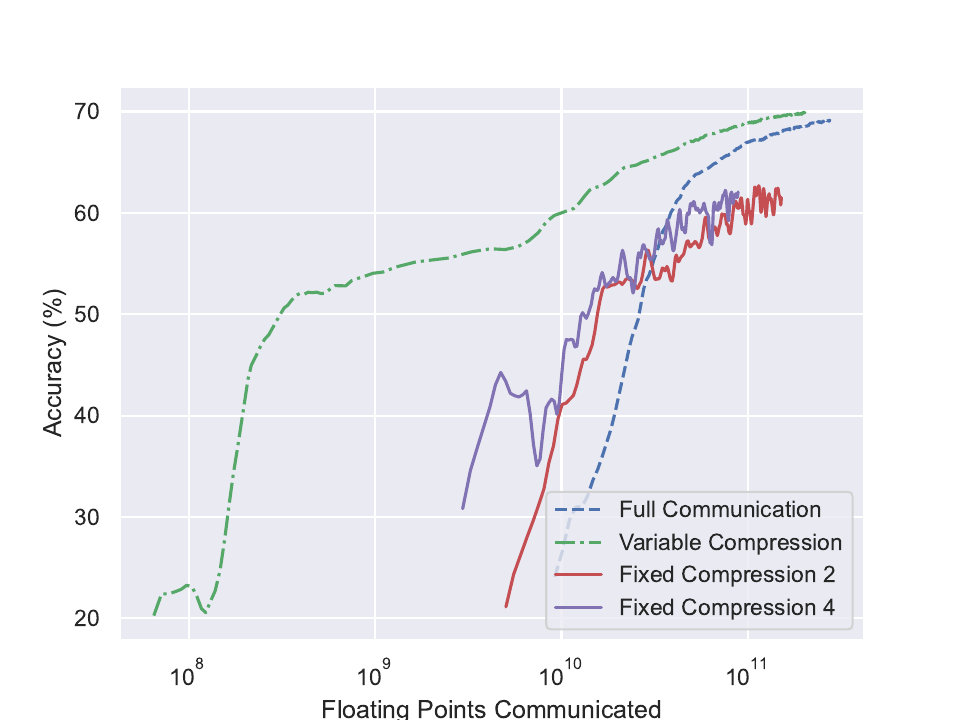}
		\caption{Arxiv Dataset}
		\label{fig:eff_arxiv}
	\end{subfigure}%
	\begin{subfigure}{0.5\textwidth}
		\centering
		\includegraphics[width=\linewidth]{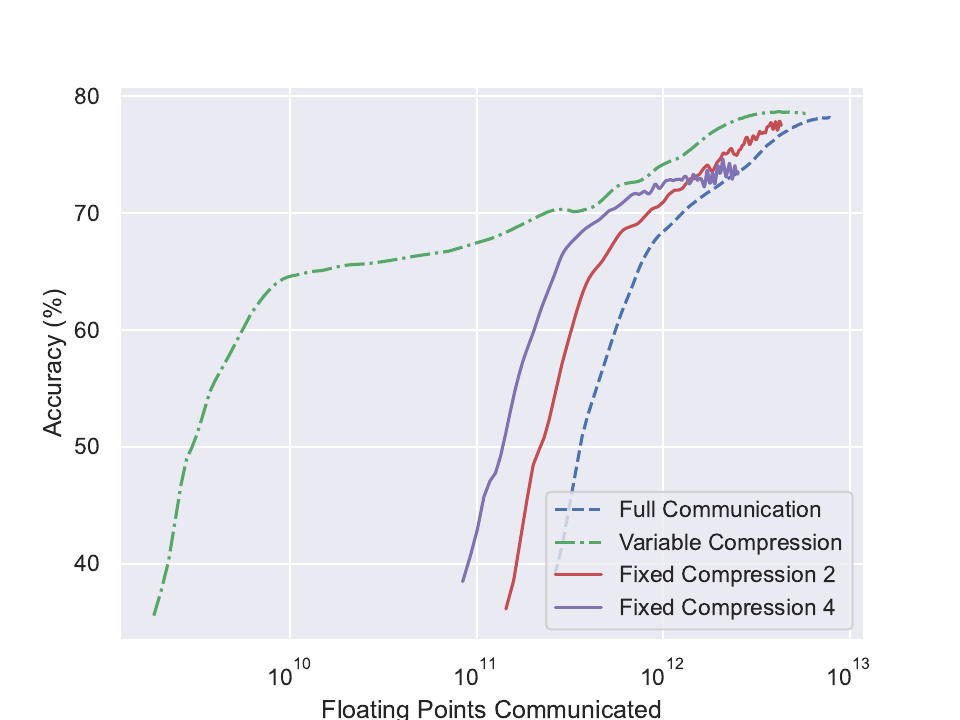}
		\caption{Products Dataset}
		\label{fig:eff_prods}
	\end{subfigure}
	\caption{Accuracy per floating points communicated for random partitioning with $16$ servers.}
	\label{fig:efficiency}
\end{figure*}

\subsection{Efficiency}
In terms of efficiency, we can plot the accuracy as a function of the number of floating point numbers communicated between servers. The fixed compression and full communication schemes communicate a constant number of bytes in each round of communication. This number is proportional to the cross-edges between machines, multiplied by the compression coefficient, which in the case of full communication is one. 
Our method is particularly useful given that at the early stages of training, fewer bytes of communication are needed, and the GNN can rely more heavily on the local data. Intuitively, all learning curves show a similar slope at the beginning of training, and they converge to different values in the later stages. 
In Figure~\ref{fig:efficiency} we corroborate that our method attains the best accuracy as a function of bytes communicated throughout the full trajectory. Given that the $\algo$ curve in Figure~\ref{fig:efficiency} is above all curves, for any communication budget i.e. number of bits, $\algo$ obtains the best accuracy of all methods considered. This indeed validates our claim that using $\algo$ is an efficient way of training a GNN. 

\section{Conclusion}
In this paper, we presented a distributed method to train GNNs by compressing the activations and reducing the overall communications. We showed that our method converges to a neighborhood of the optimal solution, while computing gradient estimators communicating fewer bytes. Theoretically, we showed that by increasing the number of bits communicated (i.e. decreasing the compression ratio) as epochs evolve, we can decrease the loss throughout the whole training trajectory. We also showed that our method only requires the compression ratio to decrease in every epoch, without the need for any information about the process. Empirically, we benchmarked our algorithm in two real-world datasets and showed that our method obtains a GNN that attains a comparable performance to the one trained on full communication, at a fraction of the communication costs.

\bibliographystyle{IEEEtran}
\bibliography{ref.bib}

\appendix
\section{Compression Mechanism}
\label{appendix:CompressionMechanism}


For the compression mechanism, we communicate the total number of elements in the feature vector and intermediate activations divided by the compression ratio. Which values of the vectors to communicate are chosen at random at the encoder's end. 
For the decoder to know which element of the vector corresponds to the true values, a random key generator is shared a priori. The decoder simply places the values communicated in the corresponding position and sets a $0$ on the rest of the non-communicated values.

\subsection{Scheduler}
\label{appendix:scheduler}
Several strategies can be utilized to increase the compress rate as we learn. A simple strategy is to increase it a fixed rate $r_{k+1}=r_k+R$, where $R$ is the fixed rate. Another strategy is to implement linear increase $r_{k}=\alpha k+r_0$, where $\alpha>0$ is the increasing slope. Another strategy is to implement an exponential increase $r_k=\frac{1}{\beta^{K-k+1}}$, with $\beta$ being the base of the exponential increase, and $K$ the total number of steps. In all cases, the scheduler is a monotone-increasing function. 

In our experiments, we considered $6$ different types of variable compression mechanisms based on the following equation, 

\begin{align}
	c =\min\bigg(c_{max} - a \frac{ c_{max} - c_{min}}{K}k, c_{min} \bigg)
\end{align}
We considered the slope $a=\{2,3,4,5,6,7\}$ and in all cases $c_{max}=128$, $c_{min}=1$. 
\section{Partitioning Details}
In all cases, the partitions had the same number of nodes in each partition. In Table \ref{table:edges} we show the number of edges in each server, and across servers. As can be seen, the number of cross edges in METIS partitioning is always smaller than random, which makes sense given the objective of the METIS algorithm. Another important aspect is that as the number of partitions increases, the cross-partition number of edges increases and correspondingly the self-partition decreases. This is why the degradation happens, local graphs are smaller, and more communication is needed.
\begin{table*}
  \small
\centering
\begin{tabular}{cc|cccc|}
\hline
\multicolumn{1}{c|}{\multirow{3}{*}{\begin{tabular}[c]{@{}c@{}}Edge\\ Type\end{tabular}}} & \multirow{3}{*}{Partitioning} & \multicolumn{4}{c}{Number of Servers}                                                               \\ \cline{3-6} 
\multicolumn{1}{c|}{}                              &                               & \multicolumn{4}{c}{OGBN-Products}                                                                   \\ \cline{3-6} 
\multicolumn{1}{c|}{}                              &                               & \multicolumn{1}{c|}{$2$}     & \multicolumn{1}{c|}{$4$}     & \multicolumn{1}{c|}{$8$}     & \multicolumn{1}{c}{$16$}     \\ \hline
\multicolumn{1}{c|}{Self }                & METIS            & \multicolumn{1}{c|}{$122019051(96.71\%)$} & \multicolumn{1}{c|}{$118533121(93.95\%)$} & \multicolumn{1}{c|}{$113962769(90.33\%)$} & \multicolumn{1}{c}{$110067019(87.24\%)$} \\ \hline
\multicolumn{1}{c|}{Self }                & Random                    & \multicolumn{1}{c|}{$64302907(50.97\%)$} & \multicolumn{1}{c|}{$33378937(26.46\%)$} & \multicolumn{1}{c|}{$17913873(14.2\%)$} & \multicolumn{1}{c}{$10179253(8.07\%)$} \\ \hline
\multicolumn{1}{c|}{Cross }               & METIS                     & \multicolumn{1}{c|}{$4148258(3.29\%)$} & \multicolumn{1}{c|}{$7634188(6.05\%)$} & \multicolumn{1}{c|}{$12204540(9.67\%)$} & \multicolumn{1}{c}{$16100290(12.76\%)$} \\ \hline
\multicolumn{1}{c|}{Cross }               & Random                     & \multicolumn{1}{c|}{$61864402(49.03\%)$} & \multicolumn{1}{c|}{$92788372(73.54\%)$} & \multicolumn{1}{c|}{$108253436(85.8\%)$} & \multicolumn{1}{c}{$115988056(91.93\%)$} \\ \hline
\multicolumn{2}{c|}{}                                                               & \multicolumn{4}{c}{OGBN-Arxiv}                                                                      \\ \hline
\multicolumn{1}{c|}{Self }                & METIS                      & \multicolumn{1}{c|}{$2173087(87.45\%)$} & \multicolumn{1}{c|}{$2038291(82.03\%)$} & \multicolumn{1}{c|}{$1864471(75.03\%)$} & \multicolumn{1}{c}{$1677943(67.52\%)$} \\ \hline
\multicolumn{1}{c|}{Self }                & Random                    & \multicolumn{1}{c|}{$1326581(53.38\%)$} & \multicolumn{1}{c|}{$749367(30.16\%)$} & \multicolumn{1}{c|}{$459233(18.48\%)$} & \multicolumn{1}{c}{$314967(12.68\%)$} \\ \hline
\multicolumn{1}{c|}{Cross }               & METIS                      & \multicolumn{1}{c|}{$311854(12.55\%)$} & \multicolumn{1}{c|}{$446650(17.97\%)$} & \multicolumn{1}{c|}{$620470(24.97\%)$} & \multicolumn{1}{c}{$806998(32.48\%)$} \\ \hline
\multicolumn{1}{c|}{Cross }               & Random               & \multicolumn{1}{c|}{$1158360(46.62\%)$} & \multicolumn{1}{c|}{$1735574(69.84\%)$} & \multicolumn{1}{c|}{$2025708(81.52\%)$} & \multicolumn{1}{c}{$2169974(87.32\%)$} \\ \hline
\end{tabular}
\caption{\label{table:edges} Number of self-edges and cross-edges for the different settings considered. }

\end{table*}

\subsection{Accuracy}
The variable compression mechanism recovers the no communication accuracy in all cases considered. There is no difference in the accuracy obtained with variable compression, and full communication. This is true, for all numbers of servers considered, and all partitions as can be seen in Tables \ref{table:results_random}, and \ref{table:results_metis} for METIS and random partition respectively. 

\begin{table*}
\centering
	\begin{tabular}{c|cccc|cccc}
		\hline
		\multirow{3}{*}{Algorithm}    & \multicolumn{4}{c|}{OGBN-Products}                                                                   & \multicolumn{4}{c}{OGBN-Arxiv}                                                                      \\ \cline{2-9} 
		& \multicolumn{4}{c}{Number of Servers}                                                               & \multicolumn{4}{c}{Number of Servers}                                                               \\ \cline{2-9} 
		& \multicolumn{1}{c|}{$2$}     & \multicolumn{1}{c|}{$4$}     & \multicolumn{1}{c|}{$8$}     & $16$    & \multicolumn{1}{c|}{$2$}     & \multicolumn{1}{c|}{$4$}     & \multicolumn{1}{c|}{$8$}     & $16$    \\ \hline
Full Comm& \multicolumn{1}{c|}{$78.40$} & \multicolumn{1}{c|}{$78.41$} & \multicolumn{1}{c|}{$78.31$} & \multicolumn{1}{c|}{$78.19$} & \multicolumn{1}{c|}{$69.20$} & \multicolumn{1}{c|}{$69.86$} & \multicolumn{1}{c|}{$69.75$} & \multicolumn{1}{c}{$69.16$} \\ \hline
No Comm& \multicolumn{1}{c|}{$77.08$} & \multicolumn{1}{c|}{$74.96$} & \multicolumn{1}{c|}{$72.22$} & \multicolumn{1}{c|}{$69.10$} & \multicolumn{1}{c|}{$64.67$} & \multicolumn{1}{c|}{$61.22$} & \multicolumn{1}{c|}{$55.95$} & \multicolumn{1}{c}{$54.52$} \\ \hline
\textbf{Variable Comp. Slope $2$(ours)}& \multicolumn{1}{c|}{$78.28$} & \multicolumn{1}{c|}{$78.32$} & \multicolumn{1}{c|}{$78.13$} & \multicolumn{1}{c|}{$78.20$} & \multicolumn{1}{c|}{$69.21$} & \multicolumn{1}{c|}{$69.30$} & \multicolumn{1}{c|}{$69.89$} & \multicolumn{1}{c}{$69.80$} \\ \hline
\textbf{Variable Comp. Slope $3$(ours)}& \multicolumn{1}{c|}{$78.56$} & \multicolumn{1}{c|}{$78.81$} & \multicolumn{1}{c|}{$78.61$} & \multicolumn{1}{c|}{$78.79$} & \multicolumn{1}{c|}{$69.24$} & \multicolumn{1}{c|}{$69.48$} & \multicolumn{1}{c|}{$70.21$} & \multicolumn{1}{c}{$70.07$} \\ \hline
\textbf{Variable Comp. Slope $4$(ours)}& \multicolumn{1}{c|}{$78.47$} & \multicolumn{1}{c|}{$78.81$} & \multicolumn{1}{c|}{$78.53$} & \multicolumn{1}{c|}{$78.64$} & \multicolumn{1}{c|}{$69.47$} & \multicolumn{1}{c|}{$69.51$} & \multicolumn{1}{c|}{$69.95$} & \multicolumn{1}{c}{$69.90$} \\ \hline
\textbf{Variable Comp. Slope $5$(ours)}& \multicolumn{1}{c|}{$78.67$} & \multicolumn{1}{c|}{$78.62$} & \multicolumn{1}{c|}{$78.14$} & \multicolumn{1}{c|}{$78.67$} & \multicolumn{1}{c|}{$69.93$} & \multicolumn{1}{c|}{$69.80$} & \multicolumn{1}{c|}{$70.12$} & \multicolumn{1}{c}{$69.99$} \\ \hline
\textbf{Variable Comp. Slope $6$(ours)}& \multicolumn{1}{c|}{$78.71$} & \multicolumn{1}{c|}{$78.66$} & \multicolumn{1}{c|}{$78.56$} & \multicolumn{1}{c|}{$78.55$} & \multicolumn{1}{c|}{$69.49$} & \multicolumn{1}{c|}{$69.59$} & \multicolumn{1}{c|}{$70.09$} & \multicolumn{1}{c}{$70.02$} \\ \hline
\textbf{Variable Comp. Slope $7$(ours)}& \multicolumn{1}{c|}{$78.38$} & \multicolumn{1}{c|}{$78.70$} & \multicolumn{1}{c|}{$78.45$} & \multicolumn{1}{c|}{$78.71$} & \multicolumn{1}{c|}{$69.21$} & \multicolumn{1}{c|}{$69.89$} & \multicolumn{1}{c|}{$69.90$} & \multicolumn{1}{c}{$69.81$} \\ \hline
Fixed Comp Rate $2$& \multicolumn{1}{c|}{$78.35$} & \multicolumn{1}{c|}{$78.13$} & \multicolumn{1}{c|}{$78.00$} & \multicolumn{1}{c|}{$77.85$} & \multicolumn{1}{c|}{$66.04$} & \multicolumn{1}{c|}{$64.97$} & \multicolumn{1}{c|}{$64.34$} & \multicolumn{1}{c}{$62.68$} \\ \hline
Fixed Comp Rate $4$& \multicolumn{1}{c|}{$78.60$} & \multicolumn{1}{c|}{$77.50$} & \multicolumn{1}{c|}{$76.08$} & \multicolumn{1}{c|}{$74.62$} & \multicolumn{1}{c|}{$66.30$} & \multicolumn{1}{c|}{$63.93$} & \multicolumn{1}{c|}{$63.79$} & \multicolumn{1}{c}{$62.21$} \\ \hline
	\end{tabular}
\caption{\label{table:results_random} Accuracy results when training GNNs with full-communication, no communication, fixed and variable compression in both OGBN-Arxiv, and OGBN-Products. We test our Algorithm with $2,4,8$ and $16$ clients with \textbf{random partitioning} of the graph. }
\end{table*}

\begin{table*}
\centering
\begin{tabular}{c|cccccccc}
\hline
\multirow{3}{*}{Algorithm}    & \multicolumn{4}{c}{OGBN-Products}                                                                                        & \multicolumn{4}{|c}{OGBN-Arxiv}                                                                      \\ \cline{2-9} 
                              & \multicolumn{8}{c}{Number of Servers}                                                                                                                                                                                           \\ \cline{2-9} 
                              & \multicolumn{1}{c|}{$2$}     & \multicolumn{1}{c|}{$4$}     & \multicolumn{1}{c|}{$8$}     & \multicolumn{1}{c|}{$16$}    & \multicolumn{1}{c|}{$2$}     & \multicolumn{1}{c|}{$4$}     & \multicolumn{1}{c|}{$8$}     & $16$    \\ \hline
Full Comm& \multicolumn{1}{c|}{$78.61$} & \multicolumn{1}{c|}{$78.79$} & \multicolumn{1}{c|}{$78.53$} & \multicolumn{1}{c|}{$78.25$} & \multicolumn{1}{c|}{$68.81$} & \multicolumn{1}{c|}{$69.63$} & \multicolumn{1}{c|}{$68.88$} & \multicolumn{1}{c}{$68.98$} \\ \hline
No Comm& \multicolumn{1}{c|}{$78.01$} & \multicolumn{1}{c|}{$78.39$} & \multicolumn{1}{c|}{$78.13$} & \multicolumn{1}{c|}{$77.62$} & \multicolumn{1}{c|}{$67.90$} & \multicolumn{1}{c|}{$66.77$} & \multicolumn{1}{c|}{$67.69$} & \multicolumn{1}{c}{$65.63$} \\ \hline
\textbf{Variable Comp. Slope $2$(ours)}& \multicolumn{1}{c|}{$78.43$} & \multicolumn{1}{c|}{$78.12$} & \multicolumn{1}{c|}{$78.80$} & \multicolumn{1}{c|}{$78.57$} & \multicolumn{1}{c|}{$69.25$} & \multicolumn{1}{c|}{$68.72$} & \multicolumn{1}{c|}{$69.08$} & \multicolumn{1}{c}{$69.14$} \\ \hline
\textbf{Variable Comp. Slope $3$(ours)}& \multicolumn{1}{c|}{$78.50$} & \multicolumn{1}{c|}{$78.26$} & \multicolumn{1}{c|}{$78.21$} & \multicolumn{1}{c|}{$78.31$} & \multicolumn{1}{c|}{$69.89$} & \multicolumn{1}{c|}{$69.48$} & \multicolumn{1}{c|}{$68.86$} & \multicolumn{1}{c}{$69.58$} \\ \hline
\textbf{Variable Comp. Slope $4$(ours)}& \multicolumn{1}{c|}{$78.73$} & \multicolumn{1}{c|}{$78.01$} & \multicolumn{1}{c|}{$78.13$} & \multicolumn{1}{c|}{$78.58$} & \multicolumn{1}{c|}{$69.39$} & \multicolumn{1}{c|}{$68.96$} & \multicolumn{1}{c|}{$68.75$} & \multicolumn{1}{c}{$69.16$} \\ \hline
\textbf{Variable Comp. Slope $5$(ours)}& \multicolumn{1}{c|}{$78.59$} & \multicolumn{1}{c|}{$78.51$} & \multicolumn{1}{c|}{$78.03$} & \multicolumn{1}{c|}{$78.35$} & \multicolumn{1}{c|}{$69.49$} & \multicolumn{1}{c|}{$69.04$} & \multicolumn{1}{c|}{$69.11$} & \multicolumn{1}{c}{$68.78$} \\ \hline
\textbf{Variable Comp. Slope $6$(ours)}& \multicolumn{1}{c|}{$78.21$} & \multicolumn{1}{c|}{$78.61$} & \multicolumn{1}{c|}{$78.50$} & \multicolumn{1}{c|}{$78.68$} & \multicolumn{1}{c|}{$69.33$} & \multicolumn{1}{c|}{$68.60$} & \multicolumn{1}{c|}{$69.41$} & \multicolumn{1}{c}{$69.36$} \\ \hline
\textbf{Variable Comp. Slope $7$(ours)}& \multicolumn{1}{c|}{$78.32$} & \multicolumn{1}{c|}{$78.39$} & \multicolumn{1}{c|}{$78.36$} & \multicolumn{1}{c|}{$78.56$} & \multicolumn{1}{c|}{$69.24$} & \multicolumn{1}{c|}{$69.28$} & \multicolumn{1}{c|}{$68.12$} & \multicolumn{1}{c}{$69.05$} \\ \hline
Fixed Comp Rate $2$& \multicolumn{1}{c|}{$78.49$} & \multicolumn{1}{c|}{$78.49$} & \multicolumn{1}{c|}{$78.18$} & \multicolumn{1}{c|}{$78.46$} & \multicolumn{1}{c|}{$67.68$} & \multicolumn{1}{c|}{$67.76$} & \multicolumn{1}{c|}{$66.73$} & \multicolumn{1}{c}{$65.78$} \\ \hline
Fixed Comp Rate $4$& \multicolumn{1}{c|}{$78.62$} & \multicolumn{1}{c|}{$78.31$} & \multicolumn{1}{c|}{$78.42$} & \multicolumn{1}{c|}{$78.34$} & \multicolumn{1}{c|}{$67.75$} & \multicolumn{1}{c|}{$66.64$} & \multicolumn{1}{c|}{$66.96$} & \multicolumn{1}{c}{$66.09$} \\ \hline
\end{tabular}
\caption{\label{table:results_metis} Accuracy results when training GNNs with full-communication, no communication, fixed and variable compression in both OGBN-Arxiv, and OGBN-Products. We test our Algorithm with $2,4,8$ and $16$ clients with \textbf{METIS partitioning} of the graph. }
\end{table*}

\subsection{Proof of Proposition \ref{prop:fixed_compression}}
\label{appendix:proposition_fixed_compression}
To begin with, we need to show these three lemmas. 

\begin{lemma}[GNN Function Difference]\label{lemma:func_diff}
	Under the assumptions of Proposition \ref{prop:fixed_compression}, the output of an $L$-layer GNN with $F$ and coefficients and $K$ filter taps per layer can be bounded by,
	\begin{align}
		||\Phi(\bbX_1,\bbS;\ccalH) - \Phi(\bbX_2,\bbS;\ccalH)  ||\leq \lambda_{\max}^L ||\bbX_1-\bbX_2||
	\end{align}
\end{lemma}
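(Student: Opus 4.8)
The plan is to prove the bound by induction on the layer index $l$, propagating the input perturbation $\|\bbX_1 - \bbX_2\|$ through the $L$ layers and showing that each layer inflates it by a factor of at most $\lambda_{\max}$. Write $\bbX_l^{(1)}$ and $\bbX_l^{(2)}$ for the layer-$l$ features produced by the recursion \eqref{eqn:gnn_layer_l} when initialized with $\bbX_0^{(1)} = \bbX_1$ and $\bbX_0^{(2)} = \bbX_2$, so that $\Phi(\bbX_i,\bbS;\ccalH) = \bbX_L^{(i)}$. The claim to be established inductively is $\|\bbX_l^{(1)} - \bbX_l^{(2)}\| \leq \lambda_{\max}^l \|\bbX_1 - \bbX_2\|$, and the lemma then follows at $l = L$.

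For the inductive step, I would first use the normalized Lipschitz property of the nonlinearity (Assumption \ref{as:normalized_lipschitz}) to strip $\non$ from the difference, since a $1$-Lipschitz pointwise map satisfies $\|\non(\bbA) - \non(\bbB)\| \leq \|\bbA - \bbB\|$. This reduces the problem to bounding $\|\sum_{k=0}^{K-1} \bbS^k(\bbX_{l-1}^{(1)} - \bbX_{l-1}^{(2)})\bbH_{l,k}\|$, where linearity has been used to collect the two convolution sums into a single convolution acting on the difference $\bbX_{l-1}^{(1)} - \bbX_{l-1}^{(2)}$. Applying the filter-boundedness of Assumption \ref{as:filter_bounded} to this graph convolution yields a factor $\lambda_{\max}$ multiplying $\|\bbX_{l-1}^{(1)} - \bbX_{l-1}^{(2)}\|$, and invoking the inductive hypothesis closes the recursion.

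The step I expect to require the most care is applying Assumption \ref{as:filter_bounded} in the multi-feature setting: the assumption is stated for a scalar-coefficient filter $h_{*\bbS}$ acting on a single graph signal, whereas the layer map $\sum_k \bbS^k (\cdot) \bbH_{l,k}$ mixes the $F_{l-1}$ input features into $F_l$ output features through the weight matrices $\bbH_{l,k}$. The clean way to handle this is to interpret $\lambda_{\max}$ as a uniform bound on the operator norm of the full per-layer linear map (the graph shifts composed with the weight matrices), which is finite by Assumption \ref{as:filter_bounded} together with the normalization of the weights; alternatively one may decompose the filter bank over the feature channels and bound each channel-to-channel filter separately before recombining. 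Once the single-layer contraction factor $\lambda_{\max}$ is justified, composing the bound over all $L$ layers gives $\|\bbX_L^{(1)} - \bbX_L^{(2)}\| \leq \lambda_{\max}^L \|\bbX_1 - \bbX_2\|$, which is the desired inequality.
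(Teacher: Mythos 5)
Your proof is correct and takes essentially the same route as the paper's: strip the pointwise nonlinearity using Assumption \ref{as:normalized_lipschitz}, bound the resulting linear graph-convolution difference by $\lambda_{\max}$ via Assumption \ref{as:filter_bounded}, and iterate over the $L$ layers (the paper writes this as an informal recursion where you phrase it as induction). Your closing remark about extending the scalar single-signal bound of Assumption \ref{as:filter_bounded} to the multi-feature filter bank $\sum_{k}\bbS^k(\cdot)\bbH_{l,k}$ flags a gap the paper's proof silently elides, and reading $\lambda_{\max}$ as an operator-norm bound on the full per-layer linear map is the right way to close it.
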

\begin{proof}[of Lemma \ref{lemma:func_diff}]
	Starting with the first layer of the GNN, and considering a single feature $||\bbx_{l1}-\bbx_{l2}||$, we can look into the difference between the successive layers as follows, 
	\begin{align}
		||\bbX_{l1}-\bbX_{l2}|| =& ||\non\bigg(\sum_{k=0}^{K-1}\bbH_k \bbS^k\bbx_{1}\bigg)-\non\bigg(\sum_{k=0}^{K-1}\bbH_k \bbS^k\bbX_{2}\bigg)||\\
		\leq& ||\sum_{k=0}^{K-1}\bbH_k \bbS^k\bbX_{1}-\sum_{k=0}^{K-1}\bbH_k \bbS^k\bbX_{2}||\label{eqn:normalized_lips}\\
		\leq& \lambda_{\max}||\bbX_{1}-\bbX_{2}|| \label{eqn:normalized_lips_filters}
	\end{align}
 Where \eqref{eqn:normalized_lips} holds by normalized Lipschitz assumption \ref{as:normalized_lipschitz}, and \eqref{eqn:normalized_lips_filters} holds by the normalized filters assumption \ref{as:filter_bounded}
	By repeating the recursion over $L$ layers we attain the desired result.
\end{proof}

\begin{lemma}[GNN Gradient Difference]\label{lemma:grad_diff} Under the assumptions of Proposition \ref{prop:fixed_compression}, the output of an $L$-layer GNN with $F$ and coefficients and $K$ filter taps per layer can be bounded by,
	\begin{align}
		&||\nabla_\ccalH\Phi(\bbX_1,\bbS;\ccalH) - \nabla_\ccalH\Phi(\bbX_2,\bbS;\ccalH)  ||\\
  &\leq 2\lambda_{\max} \sqrt{KFL} ||\bbX_1-\bbX_2|| \nonumber
	\end{align}
\end{lemma}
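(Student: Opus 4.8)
The plan is to establish Lemma \ref{lemma:grad_diff} by mirroring the layer-by-layer recursion of Lemma \ref{lemma:func_diff}, but applied to the backpropagation expression for the parameter gradient rather than to the forward map. First I would write the gradient with respect to a single coefficient block $\bbH_{l,k}$ using the chain rule: with pre-activations $\bbY_l=\sum_{k=0}^{K-1}\bbS^k\bbX_{l-1}\bbH_{l,k}$ and $\bbX_l=\non(\bbY_l)$, the block derivative $\partial\Phi/\partial\bbH_{l,k}$ factors into three pieces --- the forward signal $\bbS^k\bbX_{l-1}$ that multiplies $\bbH_{l,k}$, the diagonal Jacobian $\non'(\bbY_l)$ of the layer nonlinearity, and the backward Jacobian propagating from layer $l$ to the output $\bbX_L$. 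These three factors are what I would bound, both in magnitude and in their dependence on the input.

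Next I would control the magnitude of each factor. The diagonal nonlinearity Jacobian has operator norm at most $1$ by the normalized-Lipschitz Assumption \ref{as:normalized_lipschitz}, and each filter operator has norm at most $\lambda_{\max}$ by Assumption \ref{as:filter_bounded}; hence the backward Jacobian over the remaining $L-l$ layers is bounded by a power of $\lambda_{\max}$, exactly as in the recursion of Lemma \ref{lemma:func_diff}. To pass from magnitudes to the difference between the two inputs, I would invoke Lemma \ref{lemma:func_diff} itself: the intermediate features $\bbX_{l-1}$ are Lipschitz in the input with constant $\lambda_{\max}^{l-1}$, so the forward signal $\bbS^k\bbX_{l-1}$ changes by at most $\lambda_{\max}\cdot\lambda_{\max}^{l-1}\|\bbX_1-\bbX_2\|$ when the input is switched from $\bbX_1$ to $\bbX_2$. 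The difference of the full block derivatives is then handled with the product-splitting identity $A_1B_1-A_2B_2=A_1(B_1-B_2)+(A_1-A_2)B_2$, which is where the leading constant $2$ originates.

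Finally I would aggregate across all parameter blocks. The full gradient $\nabla_\ccalH\Phi$ is the stacking of the $L\times K$ blocks $\partial\Phi/\partial\bbH_{l,k}$, each carrying $F$ feature dimensions, so the Frobenius norm of the difference is the root-sum-square of the per-block differences. Bounding each block uniformly and counting $L$ layers, $K$ taps, and $F$ features produces the $\sqrt{KFL}$ factor, while the accumulated filter norms supply the power of $\lambda_{\max}$ and the product-split supplies the factor $2$, yielding the claimed bound.

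The main obstacle is the difference of the two \emph{backward} factors --- the nonlinearity Jacobians $\non'(\bbY_l)$ and the downstream Jacobians evaluated at $\bbX_1$ versus $\bbX_2$. Bounding $\non'(\bbY_l(\bbX_1))-\non'(\bbY_l(\bbX_2))$ directly would require $\non'$ to be Lipschitz, which normalized Lipschitzness alone does not grant; the argument must therefore route the perturbation through the forward-signal difference that Lemma \ref{lemma:func_diff} already controls, and treat the backward factors only through their bounded operator norms. A secondary bookkeeping difficulty is reconciling the accumulation of $\lambda_{\max}$ factors over depth with the single power of $\lambda_{\max}$ written in the statement, so the constants must be tracked carefully to land exactly on $2\lambda_{\max}\sqrt{KFL}$.
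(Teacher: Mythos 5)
Your proposal follows essentially the same route as the paper's proof: differentiate with respect to individual coefficients, split the difference of products via $A_1B_1-A_2B_2=A_1(B_1-B_2)+(A_1-A_2)B_2$ (the source of the factor $2$), bound one factor through the normalized-Lipschitz nonlinearity and the other through the filter bound $\lambda_{\max}$ together with the input-Lipschitz control of Lemma \ref{lemma:func_diff}, and finish with a root-sum-square count over the $KFL$ coefficients to obtain $\sqrt{KFL}$. You are in fact more explicit than the paper, which carries out the computation only for a first-layer coefficient and then asserts the result follows ``by repeating the previous result for all layers''; it never writes out the backward Jacobian you describe, and it silently drops the accumulated powers of $\lambda_{\max}$ over depth that you rightly flag as a bookkeeping concern (the single power of $\lambda_{\max}$ in the statement is only consistent with the implicit normalization $\lambda_{\max}\le 1$ suggested by Assumption \ref{as:filter_bounded} and the remark that it ``holds in practice when the weights are normalized'').

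One caveat: the obstacle you identify at the end is real, and your proposed escape does not actually close it. After the product split, a term of the form $\bigl(\non'(\bbY_l(\bbX_1))-\non'(\bbY_l(\bbX_2))\bigr)\cdot(\text{bounded factor})$ survives, and no routing through the forward-signal difference removes it: bounding that term by a multiple of $\|\bbX_1-\bbX_2\|$ requires $\non'$ to be Lipschitz, i.e.\ smoothness of $\non$, which Assumptions \ref{as:normalized_lipschitz}--\ref{as:filter_bounded} do not grant (and which fails for \texttt{ReLU}). The paper sidesteps this only via an incorrect differentiation step: its displayed derivative is $\non\bigl(\sum_{k}\bbH_k\bbS^k\bbX\bigr)\bbS^k\bbX$, with the nonlinearity itself rather than $\non'$ as the outer factor, which is exactly what lets it invoke the Lipschitz property of $\non$ on the difference. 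So your reconstruction is faithful to the paper's argument, including its weakest point; a fully rigorous version would need an added smoothness assumption on $\non$ (or would have to absorb this step into the Lipschitz-gradient Assumption \ref{as:GNN_lipschitz}), and your writeup deserves credit for naming precisely the step where the argument, as published, does not go through.
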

\begin{proof}[of Lemma \ref{lemma:grad_diff}]
	Starting with the first layer of the GNN, note that the derivative of the GNN with respect to any parameter in the first layer is the value of the polynomial. By denoting $h_v$ an element on the first layer of the GNN, the derivative with respect to the first layer is, 
	\begin{align}
		\nabla_{h_v}\non\bigg(\sum_{k=0}^{K-1}\bbH_k \bbS^k\bbX_{1}\bigg)=\non\bigg(\sum_{k=0}^{K-1}\bbH_k \bbS^k\bbX_{1}\bigg)  \bbS^k\bbX_{1}.
	\end{align}
	By taking the difference we get, 
	\begin{align}
		&||\nabla_{k_v}\non\bigg(\sum_{k=0}^{K-1}\bbH_k \bbS^k\bbx_{1}\bigg)-\nabla_{k_v}\non\bigg(\sum_{k=0}^{K-1}\bbH_k \bbS^k\bbX_{2}\bigg)||\\
		&=||\non\bigg(\sum_{k=0}^{K-1}\bbH_k \bbS^k\bbX_{1}\bigg)  \bbS^k\bbX_{1}-\non\bigg(\sum_{k=0}^{K-1}\bbH_k \bbS^k\bbX_{2}\bigg)  \bbS^k\bbX_{2}||\\
		&\leq ||\non\bigg(\sum_{k=0}^{K-1}\bbH_k \bbS^k\bbX_{1}\bigg) \bigg(  \bbS^k\bbX_{1}- \bbS^k\bbX_{2}\bigg)||\\
		&+||\bigg(\non\bigg(\sum_{k=0}^{K-1}\bbH_k \bbS^k\bbX_{1}\bigg) -\non\bigg( \sum_{k=0}^{K-1}\bbH_k \bbS^k\bbX_{2}\bigg)\bigg)  \bigg( \bbS^k\bbX_{2}\bigg)||
	\end{align}
	Now, given that the activation is normalized Lipschitz by assumption \ref{as:normalized_lipschitz}, the signals are normalized, and that the filter is normalized by assumption \ref{as:filter_bounded}, we can bound this term by, 
	\begin{align}
		&||\nabla_{k_v}\non\bigg(\sum_{k=0}^{K-1}\bbH_k \bbS^k\bbX_{1}\bigg)-\nabla_{k_v}\non\bigg(\sum_{k=0}^{K-1}\bbH_k \bbS^k\bbX_{2}\bigg)||\nonumber\\
  &\leq 2 \lambda_{\max}||\bbX_1 - \bbX_2 || 
	\end{align}
	By repeating the previous result for all layers, and all features and considering that the GNN has $KFL$ coefficients, we complete the proof. 
\end{proof}

\begin{lemma}[Lipschitz Gradients with respect to the parameters]\label{lemma:lipschitz_loss_wrt_params}Under the assumptions of Proposition \ref{prop:fixed_compression}, the output of an $L$-layer GNN with $F$ and coefficients and $K$ filter taps per layer can be bounded by,
	\begin{align}
		&||\nabla_\ccalH\ell(\bby,\Phi(\bbx,\bbS;\ccalH_1)) - \nabla_\ccalH\ell(\bby,\Phi(\bbx,\bbS;\ccalH_2))  ||\nonumber\\
  &\leq 2ML ||\ccalH_1-\ccalH_2||
	\end{align}
\end{lemma}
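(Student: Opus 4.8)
The plan is to treat $\nabla_\ccalH \ell(\bby,\Phi(\bbx,\bbS;\ccalH))$ as a product of two factors via the chain rule and then bound its variation between $\ccalH_1$ and $\ccalH_2$ by a telescoping argument. Writing $g(\ccalH) := \nabla_\Phi \ell(\bby,\Phi(\bbx,\bbS;\ccalH))$ for the gradient of the loss with respect to the GNN output and $J(\ccalH):=\nabla_\ccalH \Phi(\bbx,\bbS;\ccalH)$ for the Jacobian of the GNN with respect to its parameters, the chain rule gives $\nabla_\ccalH \ell(\bby,\Phi(\bbx,\bbS;\ccalH)) = g(\ccalH)\,J(\ccalH)$. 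First I would add and subtract the cross term $g(\ccalH_1)J(\ccalH_2)$ to obtain
\begin{align}
  \nabla_\ccalH\ell(\ccalH_1)-\nabla_\ccalH\ell(\ccalH_2) &= g(\ccalH_1)\big(J(\ccalH_1)-J(\ccalH_2)\big) \nonumber\\
  &\quad + \big(g(\ccalH_1)-g(\ccalH_2)\big)J(\ccalH_2),
\end{align}
where $\nabla_\ccalH\ell(\ccalH_i)$ abbreviates $\nabla_\ccalH \ell(\bby,\Phi(\bbx,\bbS;\ccalH_i))$, and then apply the triangle inequality together with sub-multiplicativity of the norm, leaving two products to control.

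For the second summand I would bound $\|g(\ccalH_1)-g(\ccalH_2)\|$ by composing two Lipschitz estimates: Assumption \ref{as:Loss_Grad_Lipschitz} controls it by $L\,\|\Phi(\bbx,\bbS;\ccalH_1)-\Phi(\bbx,\bbS;\ccalH_2)\|$, and Assumption \ref{as:GNN_lipschitz} (the GNN itself is $M$-Lipschitz in $\ccalH$) turns this into $LM\,\|\ccalH_1-\ccalH_2\|$; the remaining factor $\|J(\ccalH_2)\|$ is then capped by a constant. For the first summand I would invoke the second half of Assumption \ref{as:GNN_lipschitz}, namely that the GNN gradient is $M$-Lipschitz in $\ccalH$, to bound $\|J(\ccalH_1)-J(\ccalH_2)\|$ by $M\,\|\ccalH_1-\ccalH_2\|$, leaving the magnitude $\|g(\ccalH_1)\|$ of the loss gradient to be controlled by a constant of size $L$. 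Collecting both summands yields a bound of the form $(\text{const})\cdot ML\,\|\ccalH_1-\ccalH_2\|$, and the normalization built into Assumptions \ref{as:normalized_lipschitz} and \ref{as:filter_bounded} (normalized activations and filters) reduces the stray magnitude factors to unity, collapsing the constant to $2$ and giving the claimed $2ML\,\|\ccalH_1-\ccalH_2\|$.

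The main obstacle I anticipate is the bookkeeping of the two magnitude factors $\|g(\ccalH_1)\|$ and $\|J(\ccalH_2)\|$ that multiply the Lipschitz differences: the hypotheses bound \emph{differences} of gradients, not the gradients themselves, so a clean constant requires an a priori bound on $\|\nabla_\Phi\ell\|$ and on the operator norm of $\nabla_\ccalH\Phi$. This is precisely where the normalization hypotheses and the implicit compactness of the parameter domain must be used; obtaining exactly $2ML$, rather than a looser product carrying an extra factor of $M$, hinges on using the unit bounds from the normalized activations and filters in the magnitude factors while reserving the $M$-Lipschitz bounds for the difference terms. Once those magnitude bounds are pinned down, the triangle-inequality skeleton is entirely routine.
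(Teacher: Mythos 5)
Your proposal follows essentially the same route as the paper's own proof: the chain rule, the identical add-and-subtract cross-term decomposition $g(\ccalH_1)\big(J(\ccalH_1)-J(\ccalH_2)\big)+\big(g(\ccalH_1)-g(\ccalH_2)\big)J(\ccalH_2)$, Cauchy--Schwarz, and then Assumptions \ref{as:Loss_Grad_Lipschitz} and \ref{as:GNN_lipschitz} to produce the constant $2ML$. The bookkeeping concern you flag --- that the hypotheses bound \emph{differences} of gradients, so the stray magnitude factors $\|\nabla\ell\|$ and $\|\nabla_\ccalH\Phi\|$ need separate a priori bounds --- is a genuine subtlety, and in fact the paper's proof glosses over it entirely, asserting the final bound immediately after invoking the two assumptions; your explicit treatment of it via the normalization assumptions is, if anything, more careful than the published argument.
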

\begin{proof}[of Lemma \ref{lemma:lipschitz_loss_wrt_params}] We begin by using the chain rule as follows, 
	\begin{align}
		&||\nabla_\ccalH\ell(\bby,\Phi(\bbx,\bbS;\ccalH_1)) - \nabla_\ccalH\ell(\bby,\Phi(\bbx,\bbS;\ccalH_2))  ||\\
		&=||\nabla\ell(\bby,\Phi(\bbx,\bbS;\ccalH_1))\nabla_\ccalH\Phi(\bbx,\bbS;\ccalH_1)\nonumber\\
  &- \nabla\ell(\bby,\Phi(\bbx,\bbS;\ccalH_2))\nabla_\ccalH\Phi(\bbx,\bbS;\ccalH_2)  ||\\
		&\leq||\bigg(\nabla\ell(\bby,\Phi(\bbx,\bbS;\ccalH_1)) \nonumber \\&-\nabla\ell(\bby,\Phi(\bbx,\bbS;\ccalH_2))\bigg)\nabla_\ccalH\Phi(\bbx,\bbS;\ccalH_2)  ||\\
		&+||\bigg(\nabla_\ccalH\Phi(\bbx_1,\bbS;\ccalH)-\nabla_\ccalH\Phi(\bbx,\bbS;\ccalH_2)\bigg)\nonumber\\
  &\nabla\ell(\bby,\Phi(\bbx,\bbS;\ccalH_1))|| 
	\end{align}
	Note that we consider the filters $\ccalH$ as a vector, where the coefficients have been concatenated. We can now use Cauchy-Schwartz to obtain, 
	\begin{align}
		&||\nabla_\ccalH\ell(\bby,\Phi(\bbx,\bbS;\ccalH_1)) - \nabla_\ccalH\ell(\bby,\Phi(\bbx,\bbS;\ccalH_2))  ||\\
		&\leq||\nabla\ell(\bby,\Phi(\bbx,\bbS;\ccalH_1))- \nabla\ell(\bby,\Phi(\bbx,\bbS;\ccalH_2))|| \nonumber\\
  &||\nabla_\ccalH\Phi(\bbx,\bbS;\ccalH_2)  || \\
		&+||\nabla_\ccalH\Phi(\bbx,\bbS;\ccalH_1)-\nabla_\ccalH\Phi(\bbx,\bbS;\ccalH_2)|| \nonumber\\
  &||\nabla\ell(\bby,\Phi(\bbx,\bbS;\ccalH_1))|| .
	\end{align}
	We can now use Assumptions \ref{as:Loss_Grad_Lipschitz}, and \ref{as:GNN_lipschitz}, to obtain
	\begin{align}
		&||\nabla_\ccalH\ell(\bby,\Phi(\bbx,\bbS;\ccalH_1)) - \nabla_\ccalH\ell(\bby,\Phi(\bbx,\bbS;\ccalH_2))  ||\nonumber\\
  &\leq 2ML ||\ccalH_1-\ccalH_2||
	\end{align}
	By denoting $L_\nabla = 2ML$ we complete the proof. 
\end{proof}

\begin{lemma}[Submartingale]\label{lemma:submartingale} 
	Consider the iterates generated by equation \ref{eqn:SGD} where the input vector $\bbx$ is compressed with error $\epsilon$ (cf. Definition \ref{eqn:compress_decompress}). Let the step-size  be $\eta\leq 1/\lipGrad$, if the compression error is such that, 
	\begin{align}\label{eqn:prop_submartingale_condition}
		\mbE_\ccalD[||\nabla_\ccalH \ell (y,\Phi(x,\bbS;\ccalH_t)) ||^2] \geq \lipGrad^2\epsilon^2
	\end{align}
	then the iterates satisfy that, 
	\begin{align}
		\mbE[\ell(y,\Phi(x,\bbS;\ccalH_{t+1}))] \leq \mbE[\ell (y,\Phi(x,\bbS;\ccalH_t))]
	\end{align}
\end{lemma}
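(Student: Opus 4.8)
The plan is to recognize the Submartingale lemma as a one-step descent inequality for a biased-gradient SGD step, and to feed it with a second-moment bound on the gradient error caused by compression. Write $F(\ccalH):=\mbE_\ccalD[\ell(y,\Phi(x,\bbS;\ccalH))]$ for the true (uncompressed) objective, let $\bbg_t:=\nabla_\ccalH F(\ccalH_t)$ be the exact gradient, and let $\tilde\bbg_t$ be the gradient actually used in \eqref{eqn:SGD}, namely the one evaluated on the compressed--decompressed input $\tilde\bbx$. The update is then $\ccalH_{t+1}=\ccalH_t-\eta\tilde\bbg_t$, and I decompose $\tilde\bbg_t=\bbg_t+\bbe_t$, where $\bbe_t$ is the compression-induced gradient perturbation. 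The key structural facts I will rely on are the $\lipGrad$-smoothness of $F$ in $\ccalH$ from Lemma \ref{lemma:lipschitz_loss_wrt_params} and the second-moment guarantee on the compression error from Definition \ref{def:CompressionDecompression}.

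First I would apply the descent lemma for $\lipGrad$-smooth functions to the step $\ccalH_{t+1}-\ccalH_t=-\eta\tilde\bbg_t$, obtaining
\[
F(\ccalH_{t+1})\leq F(\ccalH_t)-\eta\langle\bbg_t,\tilde\bbg_t\rangle+\frac{\lipGrad\eta^2}{2}\|\tilde\bbg_t\|^2.
\]
Substituting $\tilde\bbg_t=\bbg_t+\bbe_t$, expanding both the inner product and the squared norm, and using $\eta\leq1/\lipGrad$ so that $\lipGrad\eta^2/2\leq\eta/2$, the two cross terms $\pm\eta\langle\bbg_t,\bbe_t\rangle$ cancel exactly and one is left with the clean bound
\[
F(\ccalH_{t+1})\leq F(\ccalH_t)-\frac{\eta}{2}\|\bbg_t\|^2+\frac{\eta}{2}\|\bbe_t\|^2.
\]
This is the crux: the step-size choice converts the biased update into a form where only the true-gradient magnitude and the error magnitude appear, and unbiasedness of the compressed gradient is never needed.

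The second step is to control $\mbE[\|\bbe_t\|^2]$. Here I would establish that the parameter-gradient map depends Lipschitz-continuously on its input by chaining the three preceding lemmas: applying the product rule to $\nabla_\ccalH\ell=\nabla\ell\cdot\nabla_\ccalH\Phi$, splitting $\nabla_\ccalH\ell(\tilde\bbx)-\nabla_\ccalH\ell(\bbx)$ into two terms, and bounding each factor via Assumption \ref{as:Loss_Grad_Lipschitz} together with the GNN function-difference bound (Lemma \ref{lemma:func_diff}) and the GNN gradient-difference bound (Lemma \ref{lemma:grad_diff}). This yields $\|\bbe_t\|\leq\lipGrad\|\tilde\bbx-\bbx\|$, and then $\mbE[\|\tilde\bbx-\bbx\|^2]\leq\epsilon^2$ from Definition \ref{def:CompressionDecompression} gives $\mbE[\|\bbe_t\|^2]\leq\lipGrad^2\epsilon^2$. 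Taking expectations in the descent inequality and inserting this bound produces
\[
\mbE[F(\ccalH_{t+1})]\leq\mbE[F(\ccalH_t)]-\frac{\eta}{2}\mbE[\|\bbg_t\|^2]+\frac{\eta}{2}\lipGrad^2\epsilon^2.
\]

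Finally, under the hypothesis \eqref{eqn:prop_submartingale_condition} that $\mbE_\ccalD[\|\nabla_\ccalH\ell\|^2]\geq\lipGrad^2\epsilon^2$, the last two terms are non-positive, so $\mbE[F(\ccalH_{t+1})]\leq\mbE[F(\ccalH_t)]$, which is exactly the claimed submartingale property. I expect the main obstacle to lie in the second step, namely correctly assembling the constant $\lipGrad=4M\lambda_{max}^L\sqrt{KFL}$ by tracking the contribution of each of the two chain-rule terms -- the bounded-gradient factor $M$ from Assumption \ref{as:GNN_lipschitz}, the factor $\lambda_{max}^L$ from Lemma \ref{lemma:func_diff}, and the factor $\sqrt{KFL}$ from Lemma \ref{lemma:grad_diff} -- and in ensuring that the bias of the compressed gradient (nonzero whenever $\delta>0$) is fully absorbed into this single second-moment error term.
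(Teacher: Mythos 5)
Your proposal is correct and follows essentially the same route as the paper: the paper obtains your one-step descent inequality by deriving the descent lemma by hand via the line integral $g(\alpha)=\mbE[\ell(\bby,\Phi(\bbx,\bbS;\ccalH_t-\alpha\eta\nabla_\ccalH\ell(\bby,\Phi(\tilde\bbx,\bbS;\ccalH_t))))]$ together with the identity $\|\bba-\bbb\|^2-\|\bbb\|^2=\|\bba\|^2-2\bba^\intercal\bbb$ (which is exactly your cross-term cancellation), and then bounds the compression-induced gradient perturbation by $\lipGrad^2\epsilon^2$ via Lemma \ref{lemma:grad_diff} and Definition \ref{def:CompressionDecompression}, just as you do before invoking hypothesis \eqref{eqn:prop_submartingale_condition}. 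Your use of the standard descent lemma for $\lipGrad$-smooth objectives in place of the paper's explicit integral construction is a purely cosmetic difference.
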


\begin{proof}[of Lemma \ref{lemma:submartingale}]
	This proof follows the lines of \cite{bertsekas2000gradient}, and we start by defining a continuous function $g(\alpha)$ as follows, 
	\begin{align}
		g(\alpha)=\mbE[\ell(\bby,\Phi(\bbx,\bbS;\ccalH_t-\alpha\eta_t\nabla\ell(\bby,\Phi(\tilde \bbx,\bbS;\ccalH_t))))].
	\end{align}
	Note that, $g(0)=\mbE[\ell(\bby,\Phi(\bbx,\bbS;\ccalH_t))]$ and 
 
 $g(1)=\mbE[\ell(\bby,\Phi(\bbx,\bbS;\ccalH_{k+1}))]$, and also that the integral of $\frac{\partial}{\partial \alpha} g(\alpha)$ satisfies
	\begin{align}
		&g(1)-g(0)\nonumber\\
  &= \int_{0}^1 \frac{\partial}{\partial \alpha} g(\alpha) d\alpha \\
		&=  - \mbE[\eta\nabla_\ccalH\ell(\bby,\Phi(\tilde \bbx,\bbS;\ccalH_t))^\intercal \int_{0}^1\nabla_\ccalH\ell(\bby,\Phi(\bbx,\bbS;\ccalH_t\nonumber\\
  &-\alpha\eta\nabla_\ccalH\ell(\bby,\Phi(\tilde \bbx,\bbS;\ccalH_t))))d\alpha]\label{eqn:prop_submartingale_chain_rule}\\
		&=  - \mbE[\eta\nabla_\ccalH\ell(\bby,\Phi(\tilde \bbx,\bbS;\ccalH_t))^\intercal \int_{0}^1\nabla_\ccalH\ell(\bby,\Phi(\bbx,\bbS;\ccalH_t\nonumber\\
  &-\alpha\eta\nabla_\ccalH\ell(\bby,\Phi(\tilde \bbx,\bbS;\ccalH_t))))\\
		&+\nabla_\ccalH\ell(\bby,\Phi(\bbx,\bbS;\ccalH_t))-\nabla_\ccalH\ell(\bby,\Phi(\bbx,\bbS;\ccalH_t))d\alpha
		]\label{eqn:prop_submartingale_chain_rule_add_subtract}\\
		&=- \mbE[\eta\nabla_\ccalH\ell(\bby,\Phi(\tilde \bbx,\bbS;\ccalH_t)) ^\intercal\nabla_\ccalH\ell(\bby,\Phi(\bbx,\bbS;\ccalH_t))\nonumber\\
		&+\eta\nabla_\ccalH\ell(\bby,\Phi(\tilde \bbx,\bbS;\ccalH_t)) ^\intercal\int_{0}^1\nabla_\ccalH\ell(\bby,\Phi(\bbx,\bbS;\ccalH_t\nonumber\\
  &-\alpha\eta\nabla_\ccalH\ell(\bby,\Phi(\tilde \bbx,\bbS;\ccalH_t))))\nonumber\\
		&\quad\quad\quad\quad\quad-\nabla_\ccalH\ell(\bby,\Phi(\bbx,\bbS;\ccalH_t))d\alpha
		]\label{eqn:prop_submartingale_chain_rule_add_subtract_organize},
	\end{align}
	where \eqref{eqn:prop_submartingale_chain_rule} holds by the chain rule, \eqref{eqn:prop_submartingale_chain_rule_add_subtract} holds as we are adding and subtracting the same term, and \eqref{eqn:prop_submartingale_chain_rule_add_subtract_organize} is a rearrangement of terms. We can now utilize Cauchy-Schwartz to bound the difference as follows, 
	
	\begin{align}
		&\mbE_\ccalD[\ell(\bby,\Phi(\bbx,\bbS;\ccalH_{k+1}))-\ell(\bby,\Phi(\bbx,\bbS;\ccalH_t))]\nonumber\\
		&\leq - \mbE[\eta\nabla_\ccalH\ell(\bby,\Phi(\tilde \bbx,\bbS;\ccalH_t)) ^\intercal\nabla_\ccalH\ell(\bby,\Phi(\bbx,\bbS;\ccalH_t))\\
		&+\frac{\eta }{2}||\nabla_\ccalH\ell(\bby,\Phi(\tilde\bbx,\bbS;\ccalH_t))|| || \nabla_\ccalH\ell(\bby,\Phi(\bbx,\bbS;\ccalH_t\nonumber\\
  &-\eta\nabla_\ccalH\ell(\bby,\Phi(\tilde \bbx,\bbS;\ccalH_t))))\nonumber\\
  &-\nabla_\ccalH\ell(\bby,\Phi(\bbx,\bbS;\ccalH_t))||.\nonumber 
	\end{align}
	where the previous inequality holds given that $\int_0^1 \alpha^2 d\alpha=\frac{1}{2}$. We can utilize Lemma \ref{lemma:lipschitz_loss_wrt_params} to bound the difference between the gradients as follows, 
	\begin{align}
		&\mbE[\ell(\bby,\Phi(\bbx,\bbS;\ccalH_{k+1}))-\ell(\bby,\Phi(\bbx,\bbS;\ccalH_t))]\\
		&\leq \mbE[-\eta\nabla_\ccalH\ell(\bby,\Phi(\tilde \bbx,\bbS;\ccalH_t)) ^\intercal\nabla_\ccalH\ell(\bby,\Phi(\bbx,\bbS;\ccalH_t))\nonumber\\
  &+\frac{\lipGrad\eta ^2}{2}||\nabla_\ccalH\ell(\bby,\Phi(\tilde\bbx,\bbS;\ccalH_t))||^2]\nonumber .
	\end{align}

	
	Now, we can factor $-\eta/2$, and we obtain, 
	\begin{align}
		&\mbE[\ell(\bby,\Phi(\bbx,\bbS;\ccalH_{k+1}))-\ell(\bby,\Phi(\bbx,\bbS;\ccalH_t))]\\
		&\leq  \frac{-\eta}{2}\mbE[2\nabla_\ccalH\ell(\bby,\Phi(\tilde \bbx,\bbS;\ccalH_t)) ^\intercal\nabla_\ccalH\ell(\bby,\Phi(\bbx,\bbS;\ccalH_t))\nonumber\\
  &-||\nabla_\ccalH\ell(\bby,\Phi(\tilde\bbx,\bbS;\ccalH_t))||^2] \\
		&+\mbE[\frac{\lipGrad\eta ^2-\eta}{2}||\nabla_\ccalH\ell(\bby,\Phi(\tilde\bbx,\bbS;\ccalH_t))||^2]\nonumber .
	\end{align}
	Now by imposing the condition that $\eta<\frac{1}{\lipGrad}$, the second term can be ignored. Knowing that for any two vectors, $\bba,\bbb$, $||\bba-\bbb||^2-||\bbb||^2=||\bba||^2-2\bba^\intercal\bbb$ given that the norm is induced by the inner product we obtain, 
	\begin{align}
		&\mbE[\ell(\bby,\Phi(\bbx,\bbS;\ccalH_{k+1}))-\ell(\bby,\Phi(\bbx,\bbS;\ccalH_t))]\nonumber\\
		&\leq  \frac{-\eta}{2}\mbE[||\nabla_\ccalH\ell(\bby,\Phi(\bbx,\bbS;\ccalH_t))||^2\nonumber\\
  &-||\nabla_\ccalH\ell(\bby,\Phi(\tilde\bbx,\bbS;\ccalH_t))-\nabla_\ccalH\ell(\bby,\Phi(\bbx,\bbS;\ccalH_t))||^2] \nonumber .
	\end{align}
 Finally, by Lemma \ref{lemma:grad_diff}, and compression mechanism \ref{def:CompressionDecompression}, 
	\begin{align}
		&\mbE[\ell(\bby,\Phi(\bbx,\bbS;\ccalH_{k+1}))-\ell(\bby,\Phi(\bbx,\bbS;\ccalH_t))]\\
		&\leq  \frac{-\eta}{2}\bigg(\mbE[||\nabla_\ccalH\ell(\bby,\Phi(\bbx,\bbS;\ccalH_t))||^2]-\lipGrad^2\epsilon^2\bigg)\nonumber .
	\end{align}

	By imposing the condition in \ref{eqn:prop_submartingale_condition} we complete the proof. 
\end{proof}

\begin{proof}[of Proposition \ref{prop:fixed_compression}]
	To begin with, for every $\beta$ we define the stopping time $K$ as
	\begin{align}
		K=\min_{k\geq 0} \{\mbE[||\nabla_\ccalH \ell(y,\Phi(\bbx,\bbS;\ccalH_t)) ||^2\leq \lipGrad^2\epsilon_k^2 +\beta^2]\}
	\end{align}
	We need to show that $\mbE[k^*]$ is of order $\ccalO(1/\beta)$. To do so, we start by taking the difference between the last iterate $K$ and the first one as follows, 
	\begin{align}
		&\mbE[\ell(\bby,\Phi(\bbx,\bbS;\ccalH_0))-\ell(\bby,\Phi(\bbx,\bbS;\ccalH_t))]\\
		&=\mbE_{K}[\mbE[\sum_{k=1}^K\ell(\bby,\Phi(\bbx,\bbS;\ccalH_{k-1}))-\ell(\bby,\Phi(\bbx,\bbS;\ccalH_t)) ]]\\
		&=\sum_{t=0}^\infty\mbE[\sum_{k=1}^t\ell(\bby,\Phi(\bbx,\bbS;\ccalH_{k-1}))\nonumber\\
  &-\ell(\bby,\Phi(\bbx,\bbS;\ccalH_t)) ]P(K=t)\label{eqn:propostion_convergence_termwise_summation}
	\end{align}
	Now, we know that for all $t\leq K$, we have that, 
	\begin{align}
		\mbE[\sum_{k=1}^K\ell(\bby,\Phi(\bbx,\bbS;\ccalH_{k-1}))-\ell(\bby,\Phi(\bbx,\bbS;\ccalH_t)) ]\geq \eta \beta .\label{eqn:propostion_convergence_difference_beta}
	\end{align}
	We can now substitute condition \ref{eqn:propostion_convergence_difference_beta} into equation \ref{eqn:propostion_convergence_termwise_summation} to obtain, 
	\begin{align}
		&\mbE[\ell(\bby,\Phi(\bbx,\bbS;\ccalH_0))-\ell(\bby,\Phi(\bbx,\bbS;\ccalH_t))]\\
		&\geq\sum_{t=0}^\infty \eta \beta K P(K=t)\geq\beta \eta \mbE[K].
	\end{align}
	Given that the loss function is non-negative, and dividing in both sides of the previous inequality by $\beta \eta$, we complete the proof.
	
\end{proof}

\section{Proof of Proposition \ref{prop:scheduler}}\label{appendix:proof_scheduler}
The sketch of this proof is as follows, first, we construct a martingale by multiplying the norm of the gradient by the condition that we want to satisfy. Second, we show that this construction is effectively a martingale. Third, we show that it converges. Finally, we show what the limit of this convergent martingale is.


To begin with, we define the filtration $\ccalF_t$ by iterates generated according to \eqref{eqn:SGD}, and the sequence $X_t$ as follows, 
\begin{align}
	&X_t = ||\nabla_\ccalH \ell (y,\phi(x,\bbS;\ccalH_t))||^2\bbone[||\nabla_\ccalH \ell (y,\phi(x,\bbS;\ccalH_t))||^2\nonumber\\
 &\geq  L_\nabla^2 \epsilon^2_{t^{'}}+\sigma, t^{'}\leq t],
\end{align}
where $\bbone[\cdot]$ is the indicator function. The expected value of $|X_t|$ is bounded by Assumption \ref{as:Loss_Grad_Lipschitz}, and $X_t$ is adapted to the filtration generated by the iterates of \ref{eqn:SGD}. 
By \cite{durrett2019probability}, to show that $X_n$ is a super-martingale, we require, 
\begin{align}
	\mbE[X_{t+1}|\ccalF_t]\leq X_t.
\end{align}
By contradiction, we can argue that $\mbE[X_{t+1}|\ccalF_t]> X_t$. Now, if $\mbE[X_{t+1}|\ccalF_t]> X_t$ for every $t>t_0$, then it must be the case that $X_{t_0}>L_\nabla^2 \epsilon^2_{t_0}+\sigma$. If this is not the case, the indicator function will make the sequence equal to $0$ for all $t>t_0$, disproving the contradiction. Given that $X_{t_0}>L_\nabla^2 \epsilon^2_{t^{'}}+\sigma$, we can fix $\epsilon_0$, and by Proposition \ref{prop:fixed_compression}, we arrive at a contradiction, and therefore $X_n$ is a super-martingale.

Given that the $X_t$ is bounded below by $0$, by the  Martingale Convergence Theorem \cite[Theorem 4.2.1]{durrett2019probability}, $X_t$ converges. 

Now it remains to show that the limit of $\lim_{t\to\infty}X_t=0$. We can assume that this is not true. We can therefore assume that $\lim_{t\to\infty}X_t=A<\infty$ with $A>\sigma$. Now, we know that the scheduler decreases, and therefore $\exists t^*:L^2_\nabla\epsilon^2_{t^*}+\sigma<A$. In this case again, $X_t$ cannot be larger that  $L^2_\nabla\epsilon^2_{t^*}+\sigma$ forever by Proposition \ref{prop:fixed_compression}. Therefore, there is no $A>\sigma$ such that $X_t$ converges to. Which implies that $X_n\to 0$.

To finalize, given that we made no assumptions over the initial time $t$, $||\nabla_\ccalH \ell (y,\phi(x,\bbS;\ccalH_t))||^2\leq \sigma$ happens infinitely often, completing the proof.

\end{document}